\newcommand{\eat}[1]{}
\def\BibTeX{{\rm B\kern-.05em{\sc i\kern-.025em b}\kern-.08em
    T\kern-.1667em\lower.7ex\hbox{E}\kern-.125emX}}
\newtheorem{lemma}{Lemma}
\newtheorem{theorem}{Theorem}
\newcommand{\linucb}{\textsc{LinUCB}\xspace}
\newcommand{\linrel}{\textsc{LinREL}\xspace}
\newcommand{\ts}{\textsc{ThompsonSampling}\xspace}
\newcommand{\fullversion}[2]{#2} 
\def\0{{\mathbf 0}}
\def\1{{\mathbf 1}}
\DeclareMathOperator*{\argmax}{arg\,max}
\DeclareMathOperator*{\argmin}{arg\,min}
\newcommand{\diag}{\mathop{\mathtt{diag}}}
\newcommand{\trace}{\mathop{\mathtt{trace}}}
\newcommand{\vect}{\mathop{\mathtt{vec}}}
\newcommand{\ub}{\mathbf{u}}
\newcommand{\vb}{\mathbf{v}}
\newcommand{\zb}{\mathbf{z}}
\newcommand{\ubb}{\mathbb{u}}
\newcommand{\vbb}{\mathbb{v}}
\newcommand{\zbb}{\mathbb{z}}
\newcommand{\reals}{\mathbb{R}}
\newcommand{\naturals}{\mathbb{N}}
\newcommand{\x}{\mathbf{x}}
\newcommand{\squishlist}{
 \begin{list}{$\bullet$}
  {  \setlength{\itemsep}{0pt}
     \setlength{\parsep}{3pt}
     \setlength{\topsep}{3pt}
     \setlength{\partopsep}{0pt}
     \setlength{\leftmargin}{2em}
     \setlength{\labelwidth}{1.5em}
     \setlength{\labelsep}{0.5em}
} }
\newcommand{\squishlisttight}{
 \begin{list}{$\bullet$}
  { \setlength{\itemsep}{0pt}
    \setlength{\parsep}{0pt}
    \setlength{\topsep}{0pt}
    \setlength{\partopsep}{0pt}
    \setlength{\leftmargin}{2em}
    \setlength{\labelwidth}{1.5em}
    \setlength{\labelsep}{0.5em}
} }
\newcommand{\squishdesc}{
 \begin{list}{}
  {  \setlength{\itemsep}{0pt}
     \setlength{\parsep}{3pt}
     \setlength{\topsep}{3pt}
     \setlength{\partopsep}{0pt}
     \setlength{\leftmargin}{1em}
     \setlength{\labelwidth}{1.5em}
     \setlength{\labelsep}{0.5em}
} }
\newcommand{\squishend}{
  \end{list}
}
\title{Bandits Under the Influence}
\author{Silviu Maniu\\
  LRI, CNRS\\
    Universit\'e Paris-Saclay\\
    Orsay, France\\
    silviu.maniu@lri.fr
  \and
  Stratis Ioannidis\\
  Electrical and Computer Engineering\\
    Northeastern University\\
    Boston, MA, USA\\
    ioannidis@ece.neu.edu
  \and
  Bogdan Cautis\\
  LRI, CNRS\\
    Universit\'e Paris-Saclay\\
    Orsay, France\\
    bogdan.cautis@lri.fr
}
\date{}
\begin{document}

\maketitle

\begin{abstract}
  Recommender systems should adapt to user interests as the latter evolve. A prevalent cause for the evolution of user interests is the influence of their social circle. In general, when the interests are not known,  online algorithms that explore the recommendation space while also exploiting observed preferences are preferable.
  We present online recommendation algorithms rooted in the linear multi-armed bandit literature. Our bandit algorithms are tailored precisely to recommendation scenarios where user interests evolve under social influence.  In particular, we show that our adaptations of the classic \textsc{LinREL} and \ts algorithms maintain the same asymptotic regret bounds as in the non-social case. We validate our approach experimentally using both synthetic and real datasets.
\end{abstract}

\section{Introduction}


Recommender systems 
can benefit significantly from sequential learning techniques, such as multi-armed bandit algorithms,  when user interests are a priori unknown,  hardly generalizable, or  highly dynamic. Such conditions arise in news recommendation scenarios, where the turnover of items is simply too high to enable a reasonable application of traditional recommendation algorithms, or in cold-start scenarios, i.e., when addressing a new or ever-changing user base.  Online recommendation algorithms (re)learn preferences over time and continuously, striking a balance between exploiting  popular recommendation options and exploring new ones, that may improve overall  user satisfaction.  

One  prevalent reason for the continuous evolution of user interests, calling for such online learning approaches for recommendation, is \textit{social influence}, under which connected users converge to similar interests. While realistic models for social influence remain an only partially understood area, 
the presence of influence in social media -- by either global and local mechanisms -- has been  extensively studied formally \cite{kempe03, DBLP:conf/kdd/DomingosR01} and verified experimentally ~\cite{bakshy2011everyone,cheng2014cascades}.


Motivated by the above observations, we study  an online recommender system \emph{that learns user interests as they evolve under social influence}. 
We consider a scenario in which the initial interests of users are unknown, but the effects of the social network on their evolution are understood and modelled by a probabilistic social graph -- akin to the independent cascade model \cite{lei2015online}. 
More precisely, the recommender follows the combined objective of maximizing rewards (i.e., user ratings) over a finite horizon, while simultaneously discovering user interests. The latter however are subject to a drift caused by social influence.

This setting gives rise to several challenges not present in  classic recommender systems. The first is the usual multi-armed bandit (MAB) challenge of exploration, in discovering and tracking user interests, vs.~exploitation, via the recommendation of pertinent content. The second (and most crucial) challenge, differentiating us from classic MAB and recommendation literature alike, is that \emph{recommendations become coupled via social influence}. As a result, the status quo of targeting recommendations on individual users separately is suboptimal. Instead, social influence implies that a \emph{global} recommendation strategy needs to be optimized across users. This leads to a combinatorial explosion of the space of possible recommendation strategies, as the latter grow exponentially in the number of users.

Our main contribution is to address these two challenges in a comprehensive fashion. In particular:
\squishlist
    \item We show that the \textsc{LinREL} \cite{auer2003confidence,dani2008stochastic}  and \textsc{ThompsonSampling}  \cite{agrawal2014thompson} bandit algorithms are a \emph{natural fit to our setting}. We provide \emph{regret} bounds for both methods, taking into account the effect of social influence. 
    \item Crucially, we  establish that both algorithms are \emph{tractable}: despite the exponential size of possible global recommendation strategies (i.e., arms) in our setting, we derive polynomial-time algorithms for arm selection under both the \linrel and the \ts algorithms.
    \item We also consider \textsc{LinUCB}, another popular linear bandit strategy \cite{chu2011contextual,cesabianchi2013gang, wu2016contextual, li2017provably}; the corresponding arm selection process turns out to be intractable, but a solution can be approximated within a constant in polynomial time; unfortunately, this  implies that the resulting algorithm comes with no regret guarantees.
\squishend
Importantly, to the best of our knowledge, we are the first to analyze and compare \linrel, \ts, and \linucb, both from a tractability and a regret perspective, in the presence of social influence. From a technical standpoint, our analysis requires both revisiting regret bounds under a dynamic influence setting, but also tackling the exponential size of the recommendation strategy space. We accomplish the latter through the reduction of the arm selection process to an optimization with a linear objective, which becomes separable across users.

\fullversion{We omit proofs and experimental details from this short paper; both can be found in the extended version \cite{fullversion}.}{
The remainder of the paper is organized as follows. We discuss related work in Sec.~\ref{sec:related} and formally state our problem in Sec.~\ref{sec:problemformulation}. We present our bandit algorithms, along with guarantees, in Sec.~\ref{sec:linrel}-\ref{sec:linucb}. Our experimental evaluations are in Sec.~\ref{sec:exp}. We conclude in Sec.~\ref{sec:conclusions}.
}


\fullversion{}{
\section{Related Work}\label{sec:related}
  
There is an extremely rich literature on recommender systems 
(see, e.g., the recent survey by Ricci~\cite{DBLP:reference/snam/Ricci18}). The temporal dynamics of user interests have been accounted for in many  recent works in this area \cite{DBLP:journals/cacm/Koren10,DBLP:conf/wsdm/SarmaGPZ12}.  Approaches for recommendation in social media, including aspects such as social influence, have also been considered extensively in recent years (see the survey by Eirinaki et al.~\cite{Eirinaki18}), though not in the formal bandit setting we study here.   

Online recommendation algorithms, like the one by El-Arini et al.~\cite{El-arini09turningdown}, encompass approaches that learn the underlying parameters while running recommendation campaigns. In order to guide users through the flood of information in social media, an  online  learning  framework should quickly learn  user preferences from limited feedback, while minimizing the incurred penalty (regret). This can be cast as an online learning-to-rank problem  \cite{DBLP:conf/recsys/OdijkS17}.  Multi-armed bandits have  been used  in recommendation scenarios where the model must be learned and updated continuously during recommendation campaigns \cite{DBLP:journals/corr/abs-1904-07272}. Contextual linear bandits also have a long history in online learning \cite{auer2003confidence,dani2008stochastic,chu2011contextual,abbasi2011improved,agrawal2014thompson,li2017provably}; generalized linear bandits have also been extensively studied~\cite{li2017provably, abbasi2011improved, DBLP:conf/icml/AgrawalG13}. We depart from the classic setting in that ``arms'' (recommendations) are subject to the  drift induced by social influence (see Lemma~\ref{lem:quadform}). 

Contextual information may also come in the form of social influence/social ties, and this is one of the main assumptions of our work. Motivated by viral marketing in social media, influence estimation and influence
maximization have become important research problems ever since the seminal works of Kempe et al. \cite{kempe03}  and Domingos and Richardson \cite{DBLP:conf/kdd/DomingosR01}. These problems address respectively the challenges of maximizing the expected spread in a social graph  and estimating user influence from past observations (information cascades), under a certain diffusion model. In particular, two stochastic,
discrete-time diffusion models, Linear Threshold (LT) and Independent Cascade (IC), have been used in the bulk of the literature on these topics, precisely because the expected number of nodes reached is a sub-modular function of the seed set under these models \cite{kempe03}. The influence model we consider here is related to IC but is somewhat more complex, as user interests, which are vectors, evolve towards averages over their social neighbourhood. As the consequence of such averaging, it is also related to PageRank \cite{pagerank}, gossiping models \cite{DBLP:journals/ftnet/Shah09} and the average model in social voting systems~\cite{das2014modeling}.

The works closest in spirit to our setting assume that the rewards are a function of the social neighbourhood of the user. Bianchi at al.~\cite{cesabianchi2013gang} assume that a  user requiring a recommendation is given in each round, and that each node can have affinities with other nodes -- without any dynamic component. This affinity translates into a measure of closeness in the payoffs at each node. Their algorithm, \textsc{GobLIN}, is an adaptation of \linucb. This setting is refined by Wu et al.~\cite{wu2016contextual}, where the user payoffs are a mixture of their neighbors' payoffs.  Similarly, Li et al.~\cite{li2016collaborative} use a multi-user linear bandit formulation where users are clustered together via their static profile vectors; this allows good computational performance but it is far from our more general setting, where users can change, in time, to \emph{any} other profile. There are several major differences in our setting. First, we model an influence process; no assumption of similarity between users (e.g., in user inherent interests), beyond the one induced by the influence process, is present in our model. Second, we  extend our analysis to \linrel and \ts. Third, as another common characteristic of the above methods, in their practical implementations, similar nodes are clustered together and receive identical recommendations-- resulting, essentially, in a drastic reduction of the search space, as each cluster can be considered as one bandit.  This works in static settings; in our case, the profiles evolve in a dynamic way, so no clustering is possible in general. Moreover, we model a joint recommendation system, one in which each user receives a recommendation at each step, which leads to a combinatorial explosion in the recommendation space. We note that the non-stationary setting is also studied by Russac et al.~\cite{russac2019weighted}, where a drift on the profile of the user is analyzed, without however any social influence aspect and, hence, any coupling effects.

Finally, our work generalizes and extends the work by Lu et al.~\cite{lu2014optimal}: their setting is more restrictive in that  (a) inherent user profiles are known and (b) the system is in steady state, though their dynamics include, beyond social influence, attraction and aversion phenomena. We consider instead an online setting, where unknown  user interests evolve under social influence. There are, however, some limitations of our model: it does not model user profiles evolving with the recommendations~\cite{gu2017coevolution} or other product effects, such as complementarity or relationships between items~\cite{zhao2017improving}.
}

\fullversion{}{
\begin{table}
\vspace{3mm}
\begin{scriptsize}
\begin{tabular}{p{0.07\columnwidth}p{0.33\columnwidth}|p{.07\columnwidth}p{0.33\columnwidth}}
\hline
$n$ & number of users & $d$ & feature dimension\\
$[n]$& Set $\{1,\ldots,n\}$ & $\mathcal{B}$ & Recommendation set \\
$\ub,\vb,\zb$ & vectors in $\reals^d$ & $\ubb,\vbb,\zbb$ & vectors in $\reals^{nd}$ \\
$\ub_i(t)$ & User $i$'s profile in at time $t$ & $\vb_i(t)$ & recommendation to $i$ at time $t$\\
$U(t)$ & Matrix of user profiles & $V(t)$ &Matrix of recommended items\\
$r_i(t)$ & Rating by $i$ & $\alpha$ & Inherent probability\\
$P_{ij}$& Influence from $j$ to $i$ & $P$ & influence matrix\\
$\ub_i^0$ & inherent profile & $U^0$ &matrix of inherent profiles\\
$\ubb_0$ & $\vect(U^0)$ & $\vbb$ & $\vect(V)$ \\
$\langle \cdot,\cdot\rangle$ & Frobenius matrix inner product &  $\otimes$ & Kronecker product\\
$A(t) $ & $n\times n$ matrix given by \eqref{eq:social_update} & $L(t)$ & $nd\times nd$ matrix given by \eqref{L}\\
$R(T)$ & Regret at time $T$ & $\bar{r}$ & total expected reward \\
$\mathbf{r}$ & vector of expected rewards & $X$ & $n\times nd$ matrix given by \eqref{xarms}\\
\hline
\end{tabular}
\end{scriptsize}
\caption{Notation Summary.}\vspace*{-6mm}
\end{table}
}

\section{Problem Formulation}\label{sec:problemformulation}

\fullversion{}{We consider users in a social network that receive recommendations. Following \cite{lu2014optimal}, a user's interests are dynamic and are affected by  her neighbors. In particular, user reactions to recommendations are driven by two components, namely, (a) \emph{inherent behavior},  capturing a predisposition users may have towards particular topics or genres, and (b) \emph{social-influence}, capturing the effect of a user's social circle. Contrary to \cite{lu2014optimal}, the recommender system needs to not only make good recommendations, but also discover user interests \emph{while  accounting for social influence}.}

\subsection{Recommendations}
Formally, we consider $n$ users, all receiving suggestions from a recommender at discrete time steps $t\in \naturals$. Each recommended item is represented by a $d$-dimensional \emph{item profile} $\vb\in\reals^d$, capturing this item's features; we denote by $\mathcal{B}\subseteq \reals^d$ the set of available items, i.e., the recommender's \emph{catalog}. We denote by $[n]\equiv\{1,2,\ldots,n\}$ the set of all users. At each time step $t\in \naturals$,
the recommender suggests (possibly different) items from $\mathcal{B} $ to each user  $i\in [n]$. Each  $i$  responds by revealing a rating $r_i(t)\in\reals$, indicating her preference towards the item recommended to her.

We assume that ratings are generated according to the following random process. At each $t\in \naturals$,  users $i\in[n]$ have \emph{user profiles}
	represented by $d$-dimensional vectors $\ub_i(t) \in \reals^d$.
Then, if $\vb_i(t)\in \mathcal{B}$ is the profile of the item recommended to $i$ at time $t$, 
 ratings satisfy: 
\begin{align}r_i(t)=\langle\ub_i(t),\vb_i(t)\rangle +\varepsilon, \label{factor}\end{align}
	where $\varepsilon$ is zero mean, finite variance noise (i.i.d.~across users and timeslots). \fullversion{}{This ``bi-linear'' model is classic: it is the cornerstone of matrix factorization approaches \cite{bell2009matrix}. Contrary to  matrix factorization literature, however, user profiles $\ub_i(t)$, $i\in [n]$,  change through time. We describe their evolution below.}

\subsection{User interest evolution}\label{sec:interestevol}

  Following \cite{lu2014optimal}, we assume that profiles \emph{evolve} according to the following dynamics. Each user is associated with an inherent profile, capturing her personal interests. At each timeslot, she chooses with some probability to either use her inherent profile, or she chooses to use a profile that is the result of the influence of her neighborhood. Formally, at each time $t\in \naturals$, user profiles evolve according to:
  \begin{align}\ub_i(t) = \alpha \ub_i^0 + (1-\alpha)\textstyle \sum_{j\in [n]}P_{i,j} \ub_{j}(t-1),~i\in [n],\label{evol} \end{align}
	  where (a) $\ub_i^0\in \reals^d$ is user $i$'s inherent (static) profile, (b)  $\alpha\in[0,1]$ captures the probability that users act based on their inherent profiles, and (c)  $P_{ij}\in [0,1]$, $i,j\in [n]$, where $\sum_{j}P_{ij}=1$, capture the probability user $i$ is influenced by the profile of user $j$. This setting can allow different values for $\alpha$ for each user; this does not change the analysis in the following.
	 
	 The probability $P_{ij}\in[0,1]$ captures the influence that user $j$ has on user $i$. Note that users $j$ for which $P_{ij}=0$ (i.e., outside $i$'s social circle) have no influence on $i$. Moreover,  the set of pairs $(i,j)$ s.t. $P_{ij}\ne 0$, defines the social network among users.  We denote by $P\in [0,1]^{n\times n}$  the stochastic matrix with elements $P_{ij}$, $i,j\in [n]$; we assume that $P$ is ergodic (i.e., irreducible and aperiodic) \cite{gallager1996discrete}, a reasonable setting -- even if the original social influence graph is not, one can easily make it so by adding a small probability between all pairs in the graph.
	  Then, for  
   $U(t)=[\ub_i(t)]_{i\in[n]}\in \reals^{n\times d}$ the $n\times d$ matrix consisting of the profiles of all users at time $t$, \eqref{evol} can we written in matrix form as:
\begin{align}U(t) = \alpha U^0 + (1-\alpha) P U(t-1),\label{snalone}\end{align}
	where matrix $U^0=[\ub_i^0]_{i\in[n]}\in \reals^{n\times d}$ comprises  inherent profiles.  
%
%
%
%

\subsection{Bandit setting}

We consider a bandit setting, in which the recommender does not know the inherent profiles, but would nevertheless wish to  maximize rewards \eqref{factor} over a finite horizon. In particular, we assume that the recommender (a) knows the probabilities $\alpha$ and $P$, capturing the dynamics of the interest evolution and (b) observes the history of responses by users in previous timeslots $t=1,\ldots,t-1$. Based on this history, the recommender suggests items $\vb_i(t)\in \mathcal{B}$,  $i\in [n]$, in order to minimize the  \emph{aggregate regret}:
\begin{align}\label{eq:regret}R(T)=\textstyle \sum_{t=1}^T\sum_{i=1}^n \langle \ub_i(t),\vb_i^*(t)\rangle - \langle \ub_i(t), \vb_i(t)\rangle, \end{align}
	where $\vb_i^*(t)$, $i\in [n]$, $t\in [T]$, are recommendations made by an optimal  strategy that knows  vectors $\ub^0_i$, $i\in [n]$ (see  Eq.~\eqref{eq:regret2}  for a closed form characterization).

 Recommender suggestions $\mathbf{v}_i(t)$, $i\in [n]$ are selected from a set $\mathcal{B}\subseteq \reals^d$. We consider two possibilities: 
\begin{itemize}
\item $\mathcal{B}$ is a finite subset of $\reals^d$, i.e., it is  a ``catalog'' of possible recommendations.
\item $\mathcal{B}$ is an arbitrary convex subset of $\reals^d$, e.g., the unit ball $\mathbb{B}\equiv \{\vb\in \reals^d:\|\vb\|_2 \leq  1\}$.
\end{itemize}
\fullversion{}{We note that the recommendation problem associated with minimizing the aggregate regret \eqref{eq:regret} poses multiple challenges. The first is the standard MAB challenge of exploration vs.~exploitation w.r.t.~ discovering inherent profiles. The second is that the presence of social influence \emph{couples} recommendations across users. In particular, \emph{we cannot treat maximizing the social welfare (the sum of rewards) as $n$ individual/personalized recommendation problems, as is standard practice}. This significantly increases the difficulty of finding a regret-minimizing recommendation strategy. For example, the finite catalog case, the recommender needs to  consider all $|\mathcal{B}|^n$ possible recommendations, and the space of possible joint suggestions is exponential in $n$. The fact that the catalog size $|\mathcal{B}|$ may be large exacerbates the problem.}

\subsection{Relationship to linear bandits}
The aggregate expected reward is, at any time $t$, a linear function of the inherent user profiles $U^0$. This motivates our exploration of \linrel, Thompson sampling, and \linucb as candidate online algorithms with bounded regret.  
To see this, let  $V(t)=[\vb_i(t)]_{i\in[n]}\in \mathbb{R}^{n\times d}$, be the matrix comprising the recommendations made at time $t$ in each row. Then, the following lemma holds:
\begin{lemma}\label{lem:lin}
	The total expected reward $\bar{r}(t)$ at time $t$ under recommendations $V(t)$ is given by:
\begin{align}
\bar{r}(t) = \langle U^0, A(t)^\top V(t)\rangle, \label{rew}
\end{align}
where $\langle A, B\rangle = \mathop{\trace}(AB^\top),$ is the Frobenius inner product, 
and 
\begin{align}
A(t) = \alpha \sum_{k=0}^t \left(\left(1-\alpha\right) P \right)^k \in \mathbb{R}^{n\times n}.\label{eq:social_update}
\end{align}
\end{lemma}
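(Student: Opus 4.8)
The plan is to reduce the claim to a closed-form expression for the matrix of user profiles together with one trace identity. First I would note that, since the noise $\varepsilon$ in \eqref{factor} is zero-mean, the expected rating of user $i$ at time $t$ is $\expect[r_i(t)] = \langle \ub_i(t),\vb_i(t)\rangle$; summing over users and recalling that the $i$-th rows of $U(t)$ and $V(t)$ are $\ub_i(t)$ and $\vb_i(t)$, the total expected reward is $\bar r(t) = \sum_{i\in[n]}\langle \ub_i(t),\vb_i(t)\rangle = \langle U(t),V(t)\rangle$, the Frobenius inner product. So it suffices to show $\langle U(t),V(t)\rangle = \langle U^0, A(t)^\top V(t)\rangle$.

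The key step is to prove $U(t) = A(t)\,U^0$, which I would do by induction on $t$ using the matrix recursion \eqref{snalone}: the base case follows from the definition of the initial profile, and for the inductive step one checks that $A(t)$ obeys $A(t) = \alpha I + (1-\alpha)P\,A(t-1)$ (immediate from \eqref{eq:social_update}), so that $A(t)U^0 = \alpha U^0 + (1-\alpha)P\,A(t-1)U^0 = \alpha U^0 + (1-\alpha)P\,U(t-1) = U(t)$. Equivalently, one can simply unroll \eqref{snalone} directly into the geometric-type sum $\alpha\sum_{k=0}^{t}\big((1-\alpha)P\big)^k U^0$ and read off $A(t)$. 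The only point requiring a little care here is matching the indexing convention for $t\in\naturals$ and the initial condition, so that the sum runs exactly from $k=0$ to $k=t$; note that ergodicity of $P$ plays no role in this finite-horizon identity (it matters later, for convergence of $A(t)$ and for the regret analysis).

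It then remains to substitute and apply the self-adjointness of matrix multiplication with respect to the Frobenius inner product: for any conformable $M,X,Y$ one has $\langle MX,Y\rangle = \trace(MXY^\top) = \trace(XY^\top M) = \langle X, M^\top Y\rangle$ by cyclicity of the trace. Taking $M = A(t)$, $X = U^0$, $Y = V(t)$ yields $\bar r(t) = \langle A(t)U^0, V(t)\rangle = \langle U^0, A(t)^\top V(t)\rangle$, which is exactly \eqref{rew}. Since every step is a direct computation, I do not anticipate a substantive obstacle; the main thing to get right is the bookkeeping in unrolling \eqref{snalone} and consistency with the paper's Frobenius-product convention.
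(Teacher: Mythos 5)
Your proposal is correct and follows essentially the same route as the paper's proof: unroll/induct on \eqref{snalone} to get $U(t)=A(t)U^0$, then move $A(t)$ across the Frobenius inner product by cyclicity of the trace. Your explicit remark about the initial condition (effectively $U(0)=\alpha U^0$, consistent with $A(0)=\alpha I$ in Algorithm~\ref{alg:linrel}) is a useful bookkeeping point that the paper leaves implicit, but it does not change the argument.
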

\fullversion{}{
\begin{proof}
 By induction on \eqref{snalone}, one can show that:
\begin{align}
U(t) &= \alpha U^0 + \alpha(1-\alpha) P U^0 + \ldots + ((1-\alpha) P)^t \alpha U^0\nonumber\\
& = A(t) U^0
\end{align}
	for $t\geq 1$, where $A(t)$ is given by \eqref{eq:social_update}.		
Hence, the total expected reward is given by:
\begin{align}
\bar{r}(t) &= \langle A(t) U^0, V(t)\rangle = \trace(A(t)U^0 V^{\top}(t))\nonumber\\
& =\trace( U^0 V^\top (t) A(t))
=\langle U^0, A(t)^\top V(t)\rangle. \end{align}
where $\langle \cdot,\cdot \rangle$ is the Frobenius inner product. 
\end{proof}
}
Lemma~\ref{lem:lin} gives a clearer indication that the reward is indeed a linear function of the unknown inherrent profiles $U^0$. To ease exposition, but also to be consistent with existing bandit literature, we  vectorize the representation of recommendations and user profiles.
Given a $U^0,V\in \reals^{n\times d}$, we define their row-wise vector representations as:
\begin{align*}
\ubb_0&\equiv \mathtt{vec}(U^0) =[(\ub_1^0)^\top,(\ub_2^0)^\top,\ldots,(\ub_n^0)^\top]\in \reals^{nd}\\
\vbb &\equiv \vect(V) = [\mathbf{v}_1^\top,\mathbf{v}_2^\top, \ldots,\mathbf{v_n}^\top]\in \mathcal{B}^{n} \subseteq \reals^{nd},
\end{align*}
i.e., $\ubb_0$, $\vbb$ are  $nd$-dimensional vectors resulting from representing $U^0$, $V$ row-wise. 
We can then directly describe the expected reward at time $t$ as a quadratic form involving $\ubb_0,\vbb(t)$:

\begin{lemma}\label{lem:quadform}
The total expected reward at time $t$ under recommendations $V=V(t)$ is then given by:
\begin{align}
	\bar{r}(t) =  \ubb_0^\top L(t) \vbb \label{eq:bilincomp}
\end{align}
where $\ubb_0\equiv \vect(U^0)$, $\vbb \equiv \vect(V)$, and 
 \begin{align}
	 L(t)\equiv  A(t)^\top \otimes I_d\in \reals^{nd\times nd}\label{L}
 \end{align}
 is  the Kronecker product of $A(t)^\top$ with the identity matrix $I_d\in \reals^{d\times d}$.
\end{lemma}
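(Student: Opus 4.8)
The plan is to derive \eqref{eq:bilincomp} directly from Lemma~\ref{lem:lin} by unwinding the Frobenius inner product into an ordinary Euclidean inner product of vectorized matrices, and then invoking the standard $\vect$--Kronecker identity. First I would recall the two elementary facts: (i) for matrices $M,N$ of the same shape, $\langle M, N\rangle = \vect(M)^\top \vect(N)$, where $\vect(\cdot)$ is \emph{any} fixed vectorization (row-wise here) applied consistently; and (ii) the mixed-product identity $\vect_c(M X N) = (N^\top \otimes M)\,\vect_c(X)$ for column-wise vectorization, together with $\vect_r(X) = \vect_c(X^\top)$ relating the row-wise convention used in the paper to the column-wise one.

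Next I would apply these to the expression $\bar r(t) = \langle U^0, A(t)^\top V(t)\rangle$ supplied by Lemma~\ref{lem:lin}. Using fact (i), $\bar r(t) = \vect_r(U^0)^\top\,\vect_r\!\big(A(t)^\top V(t)\big) = \ubb_0^\top\,\vect_r\!\big(A(t)^\top V(t)\big)$. Then I rewrite the second vectorization through transposition: $\vect_r\!\big(A(t)^\top V(t)\big) = \vect_c\!\big(V(t)^\top A(t)\big)$. Now fact (ii), applied with left factor $I_d$, middle factor $X = V(t)^\top\in\reals^{d\times n}$, and right factor $A(t)\in\reals^{n\times n}$, gives $\vect_c\!\big(I_d\,V(t)^\top A(t)\big) = \big(A(t)^\top \otimes I_d\big)\,\vect_c\!\big(V(t)^\top\big) = \big(A(t)^\top \otimes I_d\big)\,\vect_r(V(t)) = L(t)\,\vbb$, where in the last step I used $\vect_c(V(t)^\top) = \vect_r(V(t)) = \vbb$ and the definition \eqref{L} of $L(t)$. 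Substituting back yields $\bar r(t) = \ubb_0^\top L(t)\,\vbb$, which is \eqref{eq:bilincomp}.

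The only genuinely delicate point — and the step I would be most careful about — is the bookkeeping of transposes induced by the row-wise vectorization convention adopted in the paper: the mixed-product identity is usually stated for column-wise $\vect$, so I must consistently convert via $\vect_r(X)=\vect_c(X^\top)$ and track which factor ($A(t)$ versus $A(t)^\top$) ends up in the Kronecker product. Everything else is routine linear algebra, and no concentration or probabilistic argument is needed since this is a deterministic identity about the expected reward.
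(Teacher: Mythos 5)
Your proof is correct and follows essentially the same route as the paper's: the paper's own argument concludes with exactly the chain $\langle U^0, A^\top(t) V\rangle = \ubb_0^\top \vect(A^\top(t) V) = \ubb_0^\top L(t)\vbb$, and you supply the careful justification (row-wise versus column-wise vectorization, mixed-product identity) that the paper merely asserts. The only difference is that the paper additionally expresses the per-user reward vector as $X(V(t),A(t))\,\ubb_0$ for an explicit $n\times nd$ context matrix $X$ --- a representation reused later for the least-squares estimate and for $Z(t)$ --- which your argument, starting directly from Lemma~\ref{lem:lin}, does not need for the lemma itself.
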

%
\fullversion{}{\begin{proof}The  vector  $\bar{\mathbf{r}}(t)\in \reals^n$ of expected  rewards is:
\begin{align}
\bar{r}(t) &= \diag( A(t) U^0 V^\top(t) ) = X(V(t),A(t)) \ubb_0,
\end{align}
where $X(V(t),A(t))\in \reals^{n\times nd}$ is the ``context'' (in contextual linear bandits sense) given by:
\begin{align}X(V,A) \equiv \left[ \begin{matrix}
a_{11} \mathbf{v}_1^\top &  a_{12} \mathbf{v}_1^\top & \ldots &  a_{1n} \mathbf{v}_1^\top\\
a_{21} \mathbf{v}_2^\top &  a_{22} \mathbf{v}_2^\top & \ldots &  a_{2n} \mathbf{v}_2^\top\\
& \vdots& \ddots & \vdots\\
a_{n1} \mathbf{v}_n^\top &  a_{n2} \mathbf{v}_n^\top & \ldots &  a_{nn} \mathbf{v}_n^\top\\
  \end{matrix}
\right]\label{xarms}
\end{align}
The lemma follows by observing that: 
\begin{align*}\bar{r}(t) &= \mathbf{1}^\top X(V(t),A(t))) u_0 =\langle U^0, A^\top (t) V(t)\rangle \\&= \ubb_0^
\top \vect(A^\top(t) V) = \ubb_0^\top   L(t) \vbb.
 \end{align*}
 where $L(t)$ is given by \eqref{L}. \end{proof}}

Lemma~\ref{lem:quadform} 
casts our reward (and our problem) in a linear (a.k.a.~contextual) bandit setting. The quadratic form  \eqref{eq:bilincomp} suggests that the reward is linear in both the unknown parameters $\ubb_0$ \emph{and} the recommendations $\vbb$; in turn, the evolution of interests changes the nature of this relationship via $nd\times nd$ matrix $L(t)$; in particular, the latter fully determines the coupling between recommendations across users. Finally, Lemma~\ref{lem:quadform} allows us to rewrite the regret \eqref{eq:regret} as follows:
\begin{align}\label{eq:regret2}
R(T)&=\textstyle\sum_{t=1}^T \big(\ubb_0^\top L(t)\vbb^*(t)-\ubb_0^\top L(t)\vbb(t)\big)
\end{align}
where $\vbb^*(t)=\argmax_{\vbb\in \mathcal{B}^{(n)}}\ubb_0^\top L(t)\vbb  $ is the optimal decision at  $t$. 

\fullversion{}{Armed with this representation, we turn our attention to linear bandit algorithms (\textsc{LinREL}, \textsc{ThompsonSampling}, and \textsc{LinUCB}) and their application to our problem. 
We deviate from the standard setting  precisely due to the (drifting, time-variant) matrix $L(t)$. As a result,  any classic regret results for linear bandit algorithms need to be revisited. Beyond this, however, the exponential size of the action space poses an additional challenge, as  existing algorithms may be intractable. As we discuss below, this is indeed the case for \textsc{LinUCB}; our major contribution is to show that \textsc{LinREL} and \textsc{ThompsonSampling} can be applied to our setting while still (a)  obtaining bounded regret, and (b) remaining tractable.
}

\section{\linrel Algorithm}\label{sec:linrel}

The \linrel algorithm \cite{auer2003confidence}, also called ``confidence ball'' in \cite{dani2008stochastic}, operates under the following assumptions: arms are selected from a vector space, and the expected reward observed is an (unknown) linear function of the arm selected. To select an arm, the algorithm uses a variation of the Upper Confidence Bound (UCB) principle, in that it considers a confidence bound to an estimator for each arm's reward when selecting it. The unknown linear model is estimated via a least squares fit; the upper confidence bound  constrains the next selection over an $L_1$ or $L_2$ ellipsoid centred around the current estimate. In our case, the resulting arm selection problem is non-convex; nevertheless, as we show below, it can be solved in polynomial time for a variety of different settings.


\begin{algorithm}[!t]
  \caption{ -- \linrel  }\label{alg:linrel}
  \begin{algorithmic}[1]
    \REQUIRE{matrix $P$, parameter $\alpha$, item set $\mathcal{B}$, users $[n]$}
    \STATE{\textbf{Initialization:} play $d$ pulls for each $i\in[n]$ and observe rewards $\textbf{r}_0$}
    \STATE{$A(0)\gets\alpha I$}
    
    \FOR{$t = 1, \ldots, T$}
    \STATE{estimate $$\hat{\ubb}_0(t) = \textstyle \argmin_{\ubb\in \reals^{nd}}\sum_{\tau=1}^{t-1} \|X(V(\tau),A(\tau))\ubb - \mathbf{r}(\tau)\|_2^2$$}
    \STATE{ Recommend  $$\textstyle \vbb_t=\argmax_{\vbb\in\mathcal{B}^{(n)}}\max_{\ubb\in\mathcal{C}_t}\ubb^\top L(t)\vbb$$}\label{line:optimization}
    where $\mathcal{C}_t$ is given by \eqref{c2} or \eqref{c1}
    \STATE{observe reward vector $\mathbf{r}(t)$}
    \STATE{$A(t)\gets A(t-1)+\alpha(1-\alpha)P^t$}
    
    \ENDFOR   
  \end{algorithmic}
\end{algorithm}

\subsection{Algorithm overview}
Lemma~\ref{lem:quadform} indicates that the total expected reward in our setting is indeed a linear function of ``arms'' $\vbb\in\reals^{nd}$, parametrized by unknowns $\ubb_0\in \reals^{nd}$; nevertheless, the arms are affected by the current state of influence, as captured by  $L(t)\in \reals^{nd\times nd}.$ Applied to our setting, LinREL operates as summarized in Algorithm~\ref{alg:linrel}.
As an initialization step, for each user in $[n]$, the recommender suggests $d$ arbitrary items that  span $\mathbb{B}$. For each round, ${\ubb}_0(t)$ is estimated via least squares estimation from past observations, i.e.:
\begin{align}\label{eq:lse}\hat{\ubb}_0(t) = \argmin_{\ubb\in \reals^{nd}}\sum_{\tau=1}^{t-1} \|X(V(\tau),A(\tau))\ubb - \mathbf{r}(\tau)\|_2^2\end{align}
%
where $X\in \reals^{n\times nd}$ \fullversion{is an expanded arm matrix depending on both $V$ and $A$ (see \cite{fullversion})}{is given by \eqref{xarms}} and $\mathbf{r}(\tau)\in \reals^n$ is the vector of rewards collected (i.e., user responses observed) at time $\tau$.


At iteration $t\geqslant 1$, the recommendations $\vbb(t)$ are selected by \linrel as the solutions of the optimization 
\begin{align}\vbb(t)=\argmax_{\vbb\in\mathcal{B}^{(n)}}\max_{\ubb\in\mathcal{C}_t}\ubb^\top L(t)\vbb\label{eq:opt}\end{align}
where $\mathcal{C}_t$ is an appropriately selected ellipsoid centered at $\hat{\ubb}_0$ and capturing the uncertainty of the estimate. Two possible cases considered by \cite{dani2008stochastic,auer2003confidence} are:
\begin{subequations}\label{eq:c}
\begin{align}
    \mathcal{C}^2_t &= \left\{\ubb: \|\hat{\ubb}_0(t)-\ubb\|_{2,Z(t)}\leq \sqrt{\beta_t}\right\},\label{c2}\\
    \mathcal{C}^1_t &= \left\{\ubb: \|\hat{\ubb}_0(t)-\ubb\|_{1,Z(t)}\leq \sqrt{nd\beta_t}\right\},\label{c1}
\end{align}
\end{subequations}
where $\beta_t=\beta(t,n,d,\delta)$  is a closed-form function of $t$, $n$, $d$, and a parameter $\delta$, $Z(t)$ is the inverse of the covariance (i.e., the precision) of estimate $\hat{\ubb}_0$, given by 
\begin{align}
Z(t) 
&=\textstyle\sum_{\tau=1}^{t-1}X\big(V(\tau,A(\tau))^\top X(V(\tau,A(\tau)\big),
\end{align}
and $\|\mathbf{x}\|_{2,A} = \sqrt{\mathbf{x}^\top A\mathbf{x}},$ 
$\|\mathbf{x}\|_{1,A} = \|A^{1/2}\mathbf{x}\|_1$.


Intuitively, the optimization \eqref{eq:opt} selects a recommendation that maximizes the expected reward \emph{under a perturbed} $\ubb$: though $\mathcal{C}_t$ is centered at $\hat{\ubb}_0$, directions of high variability are favored under optimization \eqref{eq:opt}. A key challenge is that \emph{\eqref{eq:opt} is not a convex optimization problem}; in fact, when the catalog $\mathcal{B}$ is a finite set, \eqref{eq:opt} is combinatorial, and the set $\mathcal{B}^{(n)}$ grows exponentially in $n$. Nevertheless, as we discuss in Sec.~\ref{sec:c1} below, when $\mathcal{C}_t=\mathcal{C}_t^1$, we can solve \eqref{eq:opt} efficiently for the different cases of set $\mathcal{B}\subset \reals^d$  presented in Sec.~\ref{sec:interestevol}, including finite catalogs.




\subsection{Regret} Most importantly, we can show the following bound on the regret:
\begin{theorem}\label{th:linrel_regret}
  Assume that, for any $0<\delta<1$:
  \begin{align}\label{eq:betat}
  \beta_t=\max\left\{128nd\ln t\ln\frac{t^2}{\delta},\left(\frac{8}{3}\ln\frac{t^2}{\delta}\right)^2\right\},
  \end{align}
  then, for $\mathcal{C}_t=\mathcal{C}_t^2$:
  \begin{align}
  \mathrm{Pr}\left(\forall T,R(T)\leqslant n\sqrt{8nd\beta_TT\ln  \left(1+\frac{n}{d}T\right)}\right)\geqslant 1-\delta,
  \end{align}
   and, for $\mathcal{C}_t=\mathcal{C}_t^1$:
    \begin{align}
  \mathrm{Pr}\left(\forall T,R(T)\leqslant n^2d\sqrt{8\beta_TT\ln  \left(1+\frac{n}{d}T\right)}\right)\geqslant 1-\delta.
  \end{align}
  
\end{theorem}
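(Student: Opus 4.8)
\medskip
\noindent\textbf{Proof plan.}
The plan is to run the classical ``confidence ball'' argument of \cite{dani2008stochastic,auer2003confidence} for the $nd$-dimensional unknown $\ubb_0$, adapted to two features of our setting: rewards arrive as a vector $\mathbf r(\tau)\in\reals^n$ per round, and the precision matrix $Z(t)=\sum_{\tau<t}X(V(\tau),A(\tau))^\top X(V(\tau),A(\tau))$ is refreshed by a rank-$n$ block each round rather than by a single rank-one term. Write $x_i(\tau)\in\reals^{nd}$ for the $i$-th row of the context matrix $X(V(\tau),A(\tau))$ of \eqref{xarms}, so that $Z(t)=\sum_{\tau<t}\sum_{i\in[n]}x_i(\tau)x_i(\tau)^\top$; by (the proof of) Lemma~\ref{lem:quadform} one has $\ubb^\top L(t)\vbb=\sum_{i\in[n]}x_i(t)^\top\ubb$ for every $\ubb\in\reals^{nd}$, and the model reads $\mathbf r(\tau)=X(V(\tau),A(\tau))\,\ubb_0+\boldsymbol\varepsilon(\tau)$ with $\boldsymbol\varepsilon(\tau)$ zero-mean noise.

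\emph{Step 1 (the confidence set traps $\ubb_0$).} I would first prove that, with $\beta_t$ as in \eqref{eq:betat}, the event $E=\{\ubb_0\in\mathcal C_t\ \forall t\ge1\}$ holds with probability at least $1-\delta$, for $\mathcal C_t\in\{\mathcal C^2_t,\mathcal C^1_t\}$. Since $\hat\ubb_0(t)-\ubb_0=Z(t)^{-1}\sum_{\tau<t}X(V(\tau),A(\tau))^\top\boldsymbol\varepsilon(\tau)$, the quantity $\|\hat\ubb_0(t)-\ubb_0\|_{2,Z(t)}^2$ is a self-normalised sum of the martingale-difference sequence $\{\varepsilon_i(\tau)\,x_i(\tau)\}_{\tau<t,\,i\in[n]}$ with predictable, norm-bounded regressors $x_i(\tau)$. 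The self-normalised / covering argument of \cite{dani2008stochastic} (equivalently \cite{abbasi2011improved}), applied with effective dimension $nd$, yields precisely the $Z(t)$-radius $\sqrt{\beta_t}$ of \eqref{c2}; the comparison $\|\cdot\|_1\le\sqrt{nd}\,\|\cdot\|_2$ then gives the radius $\sqrt{nd\beta_t}$ of \eqref{c1}. The time-variation of $L(t)$ (hence of the contexts) is harmless here, as the model stays linear in $\ubb_0$ with predictable regressors. After the $d$-per-user spanning initialisation, with the standard harmless regularisation, $Z(t)\succeq Z(1)\succeq I$, so all inverses below are well defined.

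\emph{Steps 2--3 (per-round regret and a block elliptical-potential bound).} On $E$, set $\tilde\ubb(t)=\argmax_{\ubb\in\mathcal C_t}\ubb^\top L(t)\vbb(t)$. The arm rule on line~\ref{line:optimization} of Algorithm~\ref{alg:linrel} together with $\ubb_0\in\mathcal C_t$ gives optimism, $\tilde\ubb(t)^\top L(t)\vbb(t)=\max_{\vbb,\,\ubb\in\mathcal C_t}\ubb^\top L(t)\vbb\ge\ubb_0^\top L(t)\vbb^*(t)$, whence the instantaneous regret obeys $r_t:=\ubb_0^\top L(t)(\vbb^*(t)-\vbb(t))\le(\tilde\ubb(t)-\ubb_0)^\top L(t)\vbb(t)=\sum_{i\in[n]}x_i(t)^\top(\tilde\ubb(t)-\ubb_0)$. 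For $\mathcal C_t=\mathcal C^2_t$, Cauchy--Schwarz in the $Z(t)$-norm plus the triangle inequality ($\tilde\ubb(t),\ubb_0$ lie in a $Z(t)$-ball of radius $\sqrt{\beta_t}$) give $r_t\le2\sqrt{\beta_t}\sum_{i\in[n]}\|x_i(t)\|_{2,Z(t)^{-1}}$; for $\mathcal C_t=\mathcal C^1_t$, H\"older's $\ell_1/\ell_\infty$ inequality after the change of variables $Z(t)^{1/2}$, the bound $\|\cdot\|_\infty\le\|\cdot\|_2$, and the $\|\cdot\|_{1,Z(t)}$-diameter $2\sqrt{nd\beta_t}$ give $r_t\le2\sqrt{nd\beta_t}\sum_{i\in[n]}\|x_i(t)\|_{2,Z(t)^{-1}}$, i.e. an extra factor $\sqrt{nd}$, which is exactly the gap between the two stated bounds. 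Next, since $P$ is stochastic and $\alpha\in[0,1]$, $A(t)=\alpha\sum_{k=0}^{t}((1-\alpha)P)^k$ has non-negative entries with row sums $\le1$, so $\|x_i(t)\|_2=\|\vb_i(t)\|_2\big(\sum_j a_{ij}(t)^2\big)^{1/2}\le1$ (recall $\|\vb_i(t)\|_2\le1$). Within round $t$, put $Z'(t,0)=Z(t)$ and $Z'(t,k)=Z'(t,k-1)+x_k(t)x_k(t)^\top$, so $Z'(t,n)=Z(t+1)$; the rank-one potential step gives $\sum_{i\in[n]}\|x_i(t)\|_{2,Z'(t,i-1)^{-1}}^2\le2\ln\frac{\det Z(t+1)}{\det Z(t)}$, while $Z'(t,i-1)\preceq Z(t)+(n-1)I\preceq nZ(t)$ (using $\|x_j(t)\|_2\le1$ and $Z(t)\succeq I$) gives $\|x_i(t)\|_{2,Z(t)^{-1}}^2\le n\,\|x_i(t)\|_{2,Z'(t,i-1)^{-1}}^2$, hence $\sum_{i\in[n]}\|x_i(t)\|_{2,Z(t)^{-1}}^2\le2n\ln\frac{\det Z(t+1)}{\det Z(t)}$. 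Telescoping over $t$ and bounding the determinant ratio by the trace (AM--GM, with $\trace Z(T{+}1)\le n(d+T)$) gives $\sum_{t=1}^T\sum_{i\in[n]}\|x_i(t)\|_{2,Z(t)^{-1}}^2\le2n\ln\frac{\det Z(T+1)}{\det Z(1)}\le2n^2d\,\ln(1+\tfrac nd T)$, up to the constants fixed by the initialisation convention.

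\emph{Step 4 (assembly) and the main obstacle.} Summing $r_t$ over $t$, using that $\beta_t$ is non-decreasing, then Cauchy--Schwarz over the $nT$ terms, and finally the bound just obtained,
\[
R(T)=\sum_{t=1}^T r_t\ \le\ 2\sqrt{\beta_T}\sum_{t=1}^T\sum_{i\in[n]}\|x_i(t)\|_{2,Z(t)^{-1}}\ \le\ 2\sqrt{\beta_T}\,\sqrt{nT}\,\sqrt{2n^2d\,\ln(1+\tfrac nd T)}\ =\ n\sqrt{8nd\,\beta_T\,T\ln(1+\tfrac nd T)}
\]
on $E$, which is the $\mathcal C^2_t$ bound; replacing $2\sqrt{\beta_T}$ by $2\sqrt{nd\beta_T}$ (the $\mathcal C^1_t$ case of Steps 2--3) multiplies this by $\sqrt{nd}$ and yields $R(T)\le n^2d\sqrt{8\beta_T\,T\ln(1+\tfrac nd T)}$. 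All steps after Step 1 are deterministic on $E$, and $E$ --- being a statement about all rounds at once --- has probability $\ge1-\delta$, which gives the ``$\forall T$'' form claimed. I expect the crux to be the block elliptical-potential bound in Step 3: because $Z(t)$ is updated only once per round while absorbing $n$ outer products simultaneously, the self-normalising effect within a round is weakened and one must pay an extra factor $n$ (the $Z'(t,i-1)\preceq nZ(t)$ comparison); securing this factor, and not a worse one, is precisely what produces the $n$ and $n^2d$ prefactors in the two bounds. The other delicate point is Step 1: re-deriving the explicit $\beta_t$ of \eqref{eq:betat} requires checking that the self-normalised concentration of \cite{dani2008stochastic} survives verbatim under the drifting contexts induced by $L(t)$, with $d$ replaced by the effective dimension $nd$.
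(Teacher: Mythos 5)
Your proposal follows essentially the same route as the paper's proof in Appendix~\ref{sec:linrel_proof}: a high-probability confidence event trapping $\ubb_0$ with the stated $\beta_t$ (a Freedman/self-normalised martingale argument over the $nT$ user--round pairs, i.e.\ effective dimension $nd$), optimism plus Cauchy--Schwarz to bound the instantaneous regret by $2\sqrt{\beta_t}\sum_{i}\|\x_i(t)\|_{2,Z(t)^{-1}}$ (with the extra $\sqrt{nd}$ for $\mathcal{C}^1_t$), a determinant--trace elliptical-potential bound for the rank-$n$ per-round update of $Z(t)$, and a final Cauchy--Schwarz, arriving at the identical bounds. Your explicit handling of the block update via intermediate matrices and the comparison $Z'(t,i-1)\preceq nZ(t)$ is, if anything, more careful than the paper's, which asserts $\det Z(t+1)=(1+\sum_i w_i(t)^2)\det Z(t)$ where only the inequality $\geqslant$ holds (and suffices); both versions produce the same $n$-dependent potential bound and hence the same prefactors.
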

The theorem is proved in Appendix~\ref{sec:linrel_proof}.

Thm.~\ref{th:linrel_regret} implies that \linrel, in both variants, applied to the social bandits case yields the same bound as in~\cite{dani2008stochastic}, i.e. a polylog bound of $\tilde{\mathcal O}(\sqrt{T})$, and which is known to be  tight~\cite{dani2008stochastic,abbasi2011improved}. Note that, compared to \cite{dani2008stochastic}, the bound is worse only by a factor of $n$, corresponding to the number of users in the network.
Though we provide bounds for both $\mathcal{C}^2$, $\mathcal{C}^1$, and the bounds for $\mathcal{C}^2$ are better than the ones for $\mathcal{C}^1$ by a factor of $nd$, there is a clear advantage for the $\mathcal{C}^1$ version: it makes \eqref{eq:opt} tractable. We discuss this next.

\subsection{An efficient solver of  \eqref{eq:opt} under $\mathcal{C}^1$ constraints}\label{sec:c1} Consider the case where the constraint set is $\mathcal{C}_t^1$, given by \eqref{c1}. In this case, we can solve \eqref{eq:opt} efficiently for several sets $\mathcal{B}$ of interest. In doing so, we exploit the fact that, for any $\zbb\in\reals^{nd}$,  the optimization problem:
\begin{subequations}\label{eq:linobj}
\begin{align}
\text{Maximize:} &\quad \ubb^\top \zbb\\
\text{subject to:} &  
\quad \|\hat{\ubb}_0-\ubb\|_{1,Z(t)}\leq c_t
\end{align}
\end{subequations}
attains its maximum at one of the $2nd$ extreme points of the polytope $ \|\ubb-\hat{\ubb}_0\|_{1,Z(t)}\leq c
$. These extreme points can be generated in polynomial time from $Z(t)$, and $\hat{\ubb}_0$. Hence, given $\zbb$, solving a problem of the form \eqref{eq:linobj} amounts to finding which of these $2nd$ extreme points yields the maximal inner product with $\zbb$.
This observation leads to the following means of solving \eqref{eq:opt}:
\begin{theorem}
Let $\mathcal{E}$ be the set of $2nd$ extreme points of the polytope $ \|\ubb-\hat{\ubb}_0\|_{1,Z(t)}\leq \sqrt{nd\beta_t}.$ Then, if $\mathcal{C}_t=\mathcal{C}_t^1$,  
\eqref{eq:opt} reduces to solving $2n^2d$ problems of the form:
\begin{subequations}%
\begin{align}
\text{Maximize:} &\quad \mathbf{z}^\top \mathbf{v}_i  \label{eq:simpleobj} \\
\text{subject to:} & \quad\mathbf{v}_i\in \mathcal{B},
 \end{align}
\label{eq:l1red2}%
\end{subequations}%
\noindent for $\mathbf{z}=(\ubb^T L(t))_i\in\reals^d$ the part of $\ubb^T L(t)$ corresponding to user $i\in [n]$, and $\ubb\in \mathcal{E}$.
\end{theorem}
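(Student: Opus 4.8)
The plan is to decompose the joint maximization in \eqref{eq:opt} into an outer maximization over $\ubb$ and an inner maximization over $\vbb$, and then to observe that both become finite/separable. First I would fix $\vbb$ and examine the inner problem $\max_{\ubb\in\mathcal{C}_t^1}\ubb^\top L(t)\vbb$. Writing $\zbb = L(t)\vbb\in\reals^{nd}$, this is exactly a problem of the form \eqref{eq:linobj} with $c_t=\sqrt{nd\beta_t}$: a linear objective over the $\ell_1$-type polytope $\|\ubb-\hat{\ubb}_0\|_{1,Z(t)}\leq \sqrt{nd\beta_t}$. Since a linear function over a polytope attains its maximum at a vertex, and since this polytope (being a linearly-transformed $\ell_1$ ball, $\{\ubb : \|Z(t)^{1/2}(\ubb-\hat{\ubb}_0)\|_1 \le \sqrt{nd\beta_t}\}$) has exactly $2nd$ vertices, namely $\hat{\ubb}_0 \pm \sqrt{nd\beta_t}\, Z(t)^{-1/2}\e_k$ for $k=1,\dots,nd$, the inner maximum is always attained at some $\ubb\in\mathcal{E}$. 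Crucially, this holds for \emph{every} fixed $\vbb$, so we may swap the order of the two maximizations:
\begin{align*}
\max_{\vbb\in\mathcal{B}^{(n)}}\max_{\ubb\in\mathcal{C}_t^1}\ubb^\top L(t)\vbb
= \max_{\ubb\in\mathcal{E}}\;\max_{\vbb\in\mathcal{B}^{(n)}}\ubb^\top L(t)\vbb.
\end{align*}

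Next I would handle the inner maximization over $\vbb$ for a \emph{fixed} $\ubb\in\mathcal{E}$. Here the objective $\ubb^\top L(t)\vbb = (L(t)^\top\ubb)^\top\vbb$ is linear in $\vbb$, and the feasible set $\mathcal{B}^{(n)}=\mathcal{B}\times\cdots\times\mathcal{B}$ ($n$ copies) is a product set. Hence the objective separates across users: writing $\ubb^\top L(t)=[\,(\ubb^\top L(t))_1,\dots,(\ubb^\top L(t))_n\,]$ in blocks of size $d$, we get $\ubb^\top L(t)\vbb = \sum_{i=1}^n (\ubb^\top L(t))_i^\top \vb_i$, so
\begin{align*}
\max_{\vbb\in\mathcal{B}^{(n)}}\ubb^\top L(t)\vbb = \sum_{i=1}^n \max_{\vb_i\in\mathcal{B}} (\ubb^\top L(t))_i^\top \vb_i,
\end{align*}
which is precisely $n$ problems of the form \eqref{eq:l1red2} with $\mathbf{z}=(\ubb^\top L(t))_i$. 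Combining the two reductions: for each of the $2nd$ choices of $\ubb\in\mathcal{E}$ we solve $n$ sub-problems of the form \eqref{eq:simpleobj}, giving $2n^2d$ such problems in total; the overall optimum of \eqref{eq:opt} is the best objective value over all $\ubb\in\mathcal{E}$, with $\vbb(t)$ the corresponding concatenation of the per-user optimizers. I would also note that the vertex set $\mathcal{E}$ is computable in polynomial time: it only requires a single symmetric factorization (e.g.\ Cholesky) of $Z(t)$ to form $Z(t)^{-1/2}$, and then $2nd$ vector additions.

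The main obstacle — or rather the only point needing care — is justifying the order swap and the vertex claim simultaneously. The subtlety is that we cannot simply say ``the joint optimum is at a vertex of $\mathcal{C}_t^1$'' in isolation; we need that for \emph{every} $\vbb$ the inner $\ubb$-optimum lies in the fixed finite set $\mathcal{E}$, which is what licenses replacing $\max_{\ubb\in\mathcal{C}_t^1}$ by $\max_{\ubb\in\mathcal{E}}$ uniformly before taking the outer max over $\vbb$. This is immediate from linearity in $\ubb$ for fixed $\vbb$ (a linear program over a bounded polyhedron has a vertex optimizer), but it should be stated explicitly. A secondary, purely bookkeeping point is verifying that the transformed $\ell_1$ ball indeed has exactly $2nd$ vertices and writing them down in closed form; this follows because $Z(t)$ is positive definite (it is a sum of outer products spanning $\reals^{nd}$ after the $d$-pull-per-user initialization, hence invertible), so $\ubb\mapsto Z(t)^{1/2}(\ubb-\hat{\ubb}_0)$ is an affine bijection mapping $\mathcal{E}$ bijectively onto the $2nd$ vertices $\pm\sqrt{nd\beta_t}\,\e_k$ of the standard scaled $\ell_1$ ball. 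Everything else is routine.
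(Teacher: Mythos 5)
Your proposal is correct and follows essentially the same route as the paper's proof: replace the inner maximization over $\mathcal{C}_t^1$ by a maximization over its $2nd$ extreme points (valid uniformly in $\vbb$ by linearity), swap the two maximizations, and then exploit separability of $\ubb^\top L(t)\vbb$ over the product set $\mathcal{B}^{(n)}$ to obtain $2nd\times n=2n^2d$ subproblems. Your added details --- the explicit closed form $\hat{\ubb}_0\pm\sqrt{nd\beta_t}\,Z(t)^{-1/2}\e_k$ for the vertices and the invertibility of $Z(t)$ after initialization --- are correct and simply make explicit what the paper leaves implicit.
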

\fullversion{}{
\begin{proof}
For a given $\vbb$, the optimal solution to $\max_{\ubb\in C_t^1}\ubb^\top L(t) \vbb $ is an element of $\mathcal{E}$, i.e.:
\begin{align*}\argmax_{\vbb\in \mathcal{B}^{(n)}} \max_{\ubb \in C_t^1} \ubb^\top L(t) \vbb =  \argmax_{\vbb\in \mathcal{B}^{(n)}} \max_{\ubb\in \mathcal{E}}\ubb^\top L(t) \vbb  
\end{align*}
Hence, the optimal solution to \eqref{eq:opt} can be found  by solving  $|\mathcal{E}|=2nd$ optimization problems of the form:%
\begin{subequations}%
\begin{align}%
\text{Maximize:} &\quad \ubb^\top  L(t)\vbb \\
\text{subject to:} & \quad\vbb\in \mathcal{B}^{(n)},
\label{uppl}\end{align}
\label{eq:l1red}%
\end{subequations}%
\noindent for all $2nd$ vectors $\ubb\in \mathcal{E}$. 
Note that in all cases the optimization \eqref{eq:l1red} is separable over $\mathbf{v}_i$, where $\vbb=[\mathbf{v}_1,\ldots,\mathbf{v}_n]$, and thus itself reduces to solving $n$ problems of the form \eqref{eq:l1red2}.
\end{proof} 
}
This reduction implies that we can solve \eqref{eq:opt} for \emph{a broad array of sets $\mathcal{B}$}. In particular: 
\begin{itemize}
    \item (a) When $\mathcal{B}$ is an arbitrary convex set (e.g., the Euclidian ball, a convex polytope, etc.), \eqref{eq:l1red2} is a convex optimization problem, so \eqref{eq:opt} amounts to solving $2n^2d$ convex problems. 
    \item{(b)} When $\mathcal{B}$ is a finite subset of $\reals^d$, the optimal solution to \eqref{eq:l1red2} can be found in polynomial time by finding the maximum among all $|\mathcal{B}|$ values \eqref{eq:l1red2}. Hence, a solution to \eqref{eq:opt} can be computed after a total of $2n^2d|\mathcal{B}|$ evaluations of a linear function--namely, \eqref{eq:simpleobj}.
\end{itemize} We stress here that the existence of efficient/polytime algorithms in the above cases is  remarkable precisely because of the exponential size of the possible combinations in \eqref{eq:opt}: this is most evident in the finite $\mathcal{B}$ case, where there are $|\mathcal{B}|^n$  candidate recommendation combinations across users.

\section{Thompson Sampling}
 
We turn now to a Bayesian interpretation and its associated algorithm, \ts. Instead of optimizing over a confidence ellipsoid or interval, \ts assumes  a prior on the parameter vector $\ubb_0$ and, at each step,  samples this  vector from the posterior obtained after the feedback has been observed. This way, the exploration is embedded in the uncertainty of the distribution being sampled.

\begin{algorithm}[!t]
  \caption{ -- \ts  }\label{alg:ts}
  \begin{algorithmic}[1]
    \REQUIRE{matrix $P$, parameter $\alpha$, item set $\mathcal{B}$, users $[n]$}
    \STATE{\textbf{Initialization:} play $d$ pulls for each $i\in[n]$ and observe rewards $\textbf{r}_0$}
    \STATE{$A(0)\gets\alpha I$}
    \STATE{Sample $\ubb_0(0)$ from $\mathcal{N}(\hat{\ubb}_0(0),\Sigma(0))$}
    \FOR{$t = 1, \ldots, T$}
    \STATE{estimate $$\hat{\ubb}_0(t) = \argmin_{\ubb\in \reals^{nd}}\sum_{\tau=1}^{t-1} \|X(V(\tau),A(\tau))\ubb - \mathbf{r}(\tau)\|_2^2$$}
    \STATE{ Sample $\ubb$ from $\mathcal{N}(\hat{\ubb}_0(t),\Sigma(t))$}
    \STATE{ Recommend  $\vbb_t=\argmax_{\vbb\in\mathcal{B}^{(n)}}\ubb^\top L(t)\vbb$}
    \STATE{Observe reward $\mathbf{r}(t)$, update $A(t)$ and $\Sigma(t+1)$}
    \ENDFOR   
  \end{algorithmic}
\end{algorithm}
\subsection{Algorithm overview}Algorithm~\ref{alg:ts} outlines \ts.
 We assume that our parameter vector, $\ubb_0$, is distributed as a multi-variate normal having prior $\mathcal{N}\left(\ubb(0),\Sigma(0)\right)$ -- obtained, for instance, after playing the initialization arms. At each $t$, the covariance  is updated via:
\[
  \Sigma(t+1)=\left(\Sigma(t)^{-1}\!+\!\frac{X\left(V(t),A(t)\right)^\top X\left(V(t),A(t)\right)}{\sigma^2}\right)^{-1}\!\!\!,
\]
where $\sigma^2$ is the standard deviation of the reward noise. Then, $\ubb(t)$ is obtained by sampling from the distribution $\mathcal{N}(\hat{\ubb}_0(t),\Sigma(t))$. Finally, $\vbb(t+1)$ is chosen by solving the (simpler than \eqref{eq:opt}) optimization:
\begin{align}
    \vbb(t+1)&=\argmin_{\vbb\in\mathcal{B}^{(n)}}\hat{\ubb}(t+1)^\top L(t+1)\vbb.
\end{align}

\subsection{Regret}
The quantity of interest for \ts is the \emph{Bayesian regret}, i.e., the aggregate regret in expectation:
\begin{align}
\text{BR}(T)&= \mathbf{E}\left[\sum_{t=1}^T\left(\ubb_0^\top L(t)\vbb_*(t)-\ubb_0^\top L(t)\vbb(t)\right)\right].
\end{align}
An $\tilde{\mathcal O}(\sqrt{T})$ bound applies to \ts in our setting:
\begin{theorem}\label{th:ts_regret}
  Assume that, for any $0<\delta<1$, $\beta_T$ is set as:
  \[
    \beta_T= 1+\sqrt{2\log\frac{1}{\delta}+nd\log\left(1+\frac{n}{d}T\right)}.
  \]
  Then:
  \[
  \mathrm{Pr}\left(\forall T,\mathrm{BR}(T)\leqslant 2\!+\!2n\beta_TT\sqrt{2ndT\ln\left(1\!+\!\frac{n}{d}T\right)}\right)\!\geqslant\! 1\!-\!\delta.
  \]
\end{theorem}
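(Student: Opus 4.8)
The plan is to adapt the coupling argument for the Bayesian regret of posterior sampling (in which the Bayesian regret of \ts is bounded by that of a UCB rule), to our drifting‑context setting, reusing verbatim the self‑normalised confidence ellipsoid $\mathcal{C}_t^2$ and the choice of $\beta_t$ already used in the analysis of \linrel (Theorem~\ref{th:linrel_regret}), and treating $L(t)$ exactly as there. Write $\mathcal{F}_t$ for the history of recommendations and rewards available when $\vbb(t)$ is chosen; then $\hat\ubb_0(t)$, $Z(t)$ and $\Sigma(t)$ are $\mathcal{F}_t$‑measurable, and — crucially — under the conjugate Gaussian prior/noise model the law of the sampled vector $\ubb$ in line~6 of Algorithm~\ref{alg:ts}, namely $\mathcal{N}(\hat\ubb_0(t),\Sigma(t))$, \emph{is} the posterior law of $\ubb_0$ given $\mathcal{F}_t$. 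Hence $\ubb$ and $\ubb_0$ are identically distributed given $\mathcal{F}_t$, and so are $\vbb(t)=\argmax_\vbb\ubb^\top L(t)\vbb$ and $\vbb_*(t)=\argmax_\vbb\ubb_0^\top L(t)\vbb$. Consequently, for any $\mathcal{F}_t$‑measurable $U_t:\mathcal{B}^{(n)}\to\reals$ one has $\Exp[U_t(\vbb_*(t))\mid\mathcal{F}_t]=\Exp[U_t(\vbb(t))\mid\mathcal{F}_t]$.

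Take $U_t(\vbb)=\hat\ubb_0(t)^\top L(t)\vbb+\sqrt{\beta_t}\,\|L(t)\vbb\|_{Z(t)^{-1}}$ and decompose the instantaneous Bayesian regret as
\begin{align*}
\ubb_0^\top L(t)\vbb_*(t)-\ubb_0^\top L(t)\vbb(t)
&=\underbrace{\big(\ubb_0^\top L(t)\vbb_*(t)-U_t(\vbb_*(t))\big)}_{(\mathrm{i})}\\
&\quad+\underbrace{\big(U_t(\vbb_*(t))-U_t(\vbb(t))\big)}_{(\mathrm{ii})}+\underbrace{\big(U_t(\vbb(t))-\ubb_0^\top L(t)\vbb(t)\big)}_{(\mathrm{iii})}.
\end{align*}
Summing over $t$ and taking expectations, $\Exp[\sum_t(\mathrm{ii})]=0$ by the coupling. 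For $(\mathrm{i})$ and $(\mathrm{iii})$, invoke the self‑normalised tail bound that underlies the choice of $\beta_t$ in \cite{dani2008stochastic,abbasi2011improved}: with probability at least $1-\delta$ we have $\|\hat\ubb_0(t)-\ubb_0\|_{2,Z(t)}\le\sqrt{\beta_t}$ for all $t$, i.e.\ $\ubb_0\in\mathcal{C}_t^2$ uniformly. On this event, Cauchy–Schwarz in the $Z(t)$‑norm gives $|(\hat\ubb_0(t)-\ubb_0)^\top L(t)\vbb|\le\sqrt{\beta_t}\|L(t)\vbb\|_{Z(t)^{-1}}$ for every $\vbb$, so $U_t$ is a valid upper confidence bound and $(\mathrm{i})\le0$, while $(\mathrm{iii})\le 2\sqrt{\beta_t}\|L(t)\vbb(t)\|_{Z(t)^{-1}}=:w_t(\vbb(t))$. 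On the complementary event, replacing $U_t$ by its truncation at the maximal attainable total expected reward (finite because $\mathcal{B}$ and $U^0$ are bounded and $A(t)=\alpha\sum_{k\ge0}((1-\alpha)P)^k$ is bounded, the spectral radius of $(1-\alpha)P$ being $1-\alpha<1$) keeps $(\mathrm{i})\le0$ there as well and bounds the remaining contribution by the additive constant in the statement.

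It then remains to control $\Exp\big[\sum_{t=1}^T w_t(\vbb(t))\big]$, and here the argument is identical to the $\mathcal{C}^2$ case of Theorem~\ref{th:linrel_regret}. Using $L(t)\vbb=X(V(t),A(t))^\top\ones$ (from the proof of Lemma~\ref{lem:quadform}), write $\|L(t)\vbb(t)\|_{Z(t)^{-1}}=\big\|\sum_{i=1}^n \mathbf{x}_{t,i}\big\|_{Z(t)^{-1}}\le\sum_{i=1}^n\|\mathbf{x}_{t,i}\|_{Z(t)^{-1}}$, where $\mathbf{x}_{t,i}$ is the $i$‑th row of $X(V(t),A(t))$ and $Z(t)=\sum_{\tau<t}\sum_i \mathbf{x}_{\tau,i}\mathbf{x}_{\tau,i}^\top$; this triangle inequality is what costs the factor $n$. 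Then Cauchy–Schwarz over the $nT$ indices $(t,i)$ together with the elliptical‑potential (log‑determinant) lemma, $\sum_{t,i}\|\mathbf{x}_{t,i}\|_{Z(t)^{-1}}^2\le 2\log\frac{\det Z(T{+}1)}{\det Z(1)}\le 2nd\log(1+\tfrac nd T)$, and $\beta_t\le\beta_T$, give $\sum_t w_t(\vbb(t))\le 2n\beta_T\sqrt{2ndT\log(1+\tfrac nd T)}$, which is the claimed bound up to the explicit constants.

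\textbf{Main obstacle.} The delicate point is the per‑round multi‑play structure: the effective ``arm'' at round $t$ is the single vector $X(V(t),A(t))^\top\ones$, yet $Z(t)$ absorbs $n$ rank‑one updates $\mathbf{x}_{t,i}\mathbf{x}_{t,i}^\top$ per round, so the usual one‑update form of the log‑determinant lemma does not apply directly. The fix is to replace $Z(t)^{-1}$ in $\|\mathbf{x}_{t,i}\|_{Z(t)^{-1}}^2$ by the intra‑round running precision $Z_{t,i}=Z(t)+\sum_{j<i}\mathbf{x}_{t,j}\mathbf{x}_{t,j}^\top$ and note $Z_{t,i}\preceq Z(t)+(n-1)I\preceq 2Z(t)$ once the $nd$ spanning initialisation pulls (or a small ridge term, as in the $\mathcal{C}^2$ analysis) make $\lambda_{\min}(Z(1))\gtrsim n$; this loses only a constant factor, after which the single‑update lemma applies. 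The second thing to get exactly right is that the coupling identity $\Exp[U_t(\vbb_*(t))\mid\mathcal{F}_t]=\Exp[U_t(\vbb(t))\mid\mathcal{F}_t]$ holds with equality, which relies entirely on the sampling distribution coinciding with the Bayesian posterior — the reason the guarantee is stated for Bayesian rather than frequentist regret.
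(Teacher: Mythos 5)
Your proposal is correct and follows essentially the same route as the paper's proof: the posterior-sampling coupling to a UCB rule (the paper's $\Exp_{it}[U_t(\x_{i*})]=\Exp_{it}[U_t(\x_i)]$ identity), the confidence event inherited from the \linrel analysis with the bad event contributing the additive constant $2$, and the elliptical-potential bound on the summed widths costing a factor $n$ from the per-user decomposition. The only differences are presentational --- the paper decomposes per user from the start rather than via the aggregated vector $L(t)\vbb$, and you are more explicit (and more careful) than the paper about why the log-determinant lemma survives the $n$ rank-one updates per round.
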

The proof can be found in Appendix~\ref{sec:ts_proof}.
\fullversion{}{
\subsection{Efficient solver}
In our case, \ts is implemented as described in~\cite{russo2018tutorial}, in that we do not re-compute $\hat{\ubb}$ at every step, but we maintain the sample incrementally as follows:
\begin{align}
\ubb(t\!+\!1)&=\Sigma(t\!+\!1)\left(\Sigma(t)^{-1}\ubb(t)\!+\!\frac{L(t)\vbb(t)\left(\mathbf{r}(t)\!+\!\mathbf{\tilde{w}}(t)\right)}{\sigma^2}\right),
\end{align}
where $\mathbf{\tilde{w}}(t)$ is sampled from $\mathcal{N}(0,\sigma^2I)$.
This ensures that \ts is more efficient than \linrel: only the inversion of $\Sigma(t)$ is $\mathcal{O}(nd^{2.3})$;  the rest of the operations are either matrix multiplications or sampling from multi-variate normal distributions in which dimensions are independent (the noise). In terms of the optimization over $\mathcal{B}^{(n)}$, the same observations as in Section~\ref{sec:c1} apply here: we can efficiently optimize over finite or convex sets.
}

\section{\linucb Algorithm}\label{sec:linucb}

\linucb \cite{li2017provably} is an alternative to \linrel for linear bandits. Unfortunately, in our case, it leads to an intractable problem.
%
%
Applied to our setting, \linucb is identical to Alg.~\ref{alg:linrel}, with only a change in how recommendations are selected. That is,  $\hat{\ubb}_0(t)$ is again estimated via least squares estimation \eqref{eq:lse} as in \linrel. For  $\Sigma(t) = Z^{-1}(t)$  the covariance of the estimate $\hat{\ubb}^0$, \linucb  selects
\begin{align}
\vbb_t&\!=\!\argmax_{\vbb\in (\mathcal{B})^n} \left(\hat{\ubb}_0^\top(t) L(t) \vbb  \!+\! c  \vbb^\top L(t)^\top \Sigma(t) L(t) \vbb \right).\!\!\! \label{eq:linucb_select}\end{align}
\linucb can thus be implemented by replacing line~\ref{line:optimization} of Algorithm~\ref{alg:linrel} with equation~\eqref{eq:linucb_select}.

Unfortunately, \eqref{eq:linucb_select} is a non-convex optimization, with no obvious solution for different cases of $B$. In the case where
$\mathcal{B}_t=\mathbb{B}=\{v\in\reals^d: \|v\|\leq 1\}$,
an approximate solution can be constructed via an SDP relaxation, as in~\cite{lu2014optimal}; the steps are detailed in  Appendix \ref{app:linucb}.
Nevertheless, this solution is only within a constant approximation of the optimal; as such, it cannot be used to bound the regret.

\section{Experiments}\label{sec:exp}

To validate our analysis, we evaluated the regret of the \linrel, \ts, and \linucb algorithms for both synthetic and real-life data. We describe experiments on both types of datasets below.

\begin{figure*}
    \centering
    
    \subfloat[Regret, finite set $n=10$, $d=5$, $m=100$]{\includegraphics[width=0.3\textwidth]{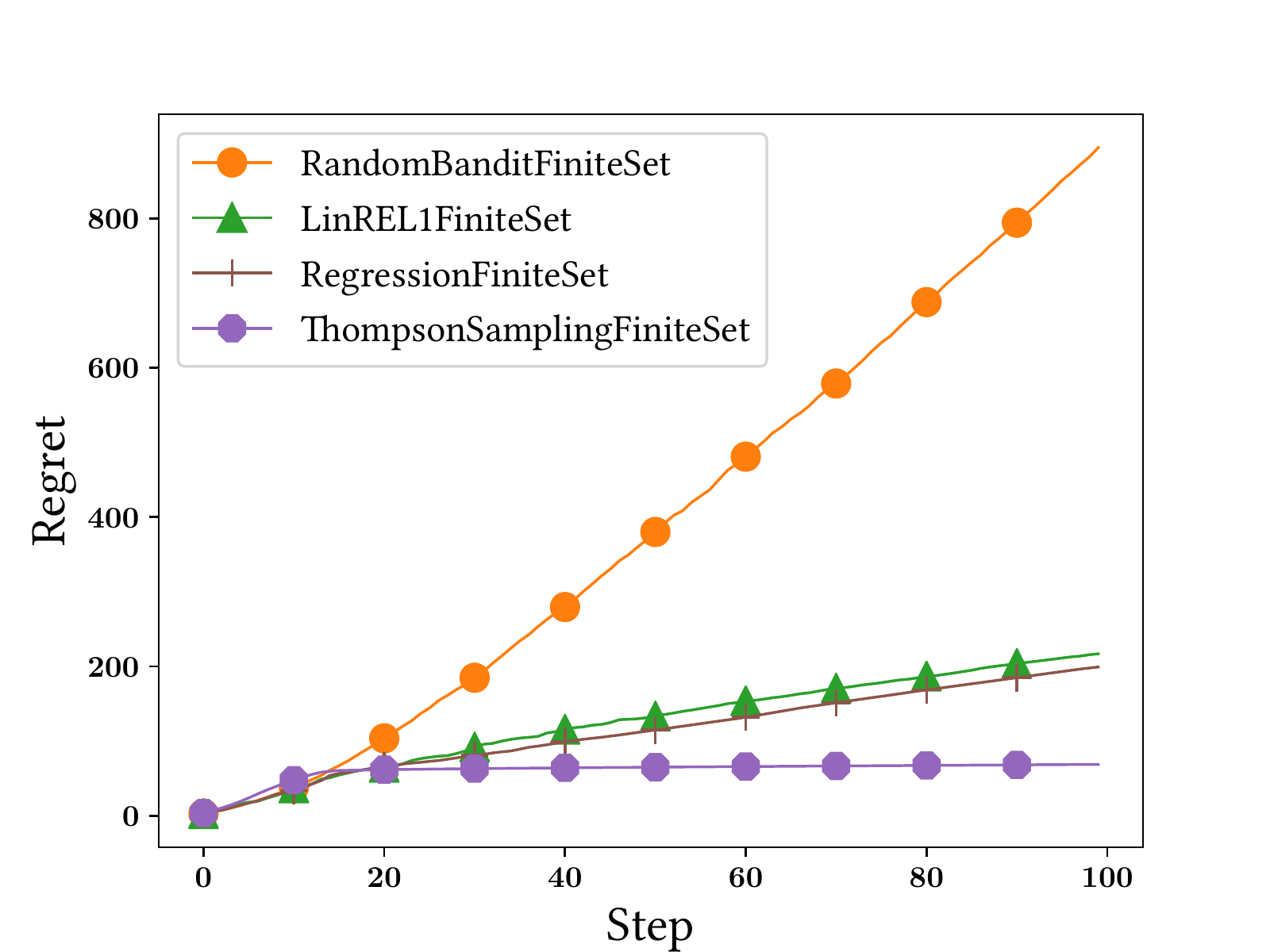}}
    ~
    \subfloat[Regret, finite set $n=100$, $d=20$, $m=1000$]{\includegraphics[width=0.3\textwidth]{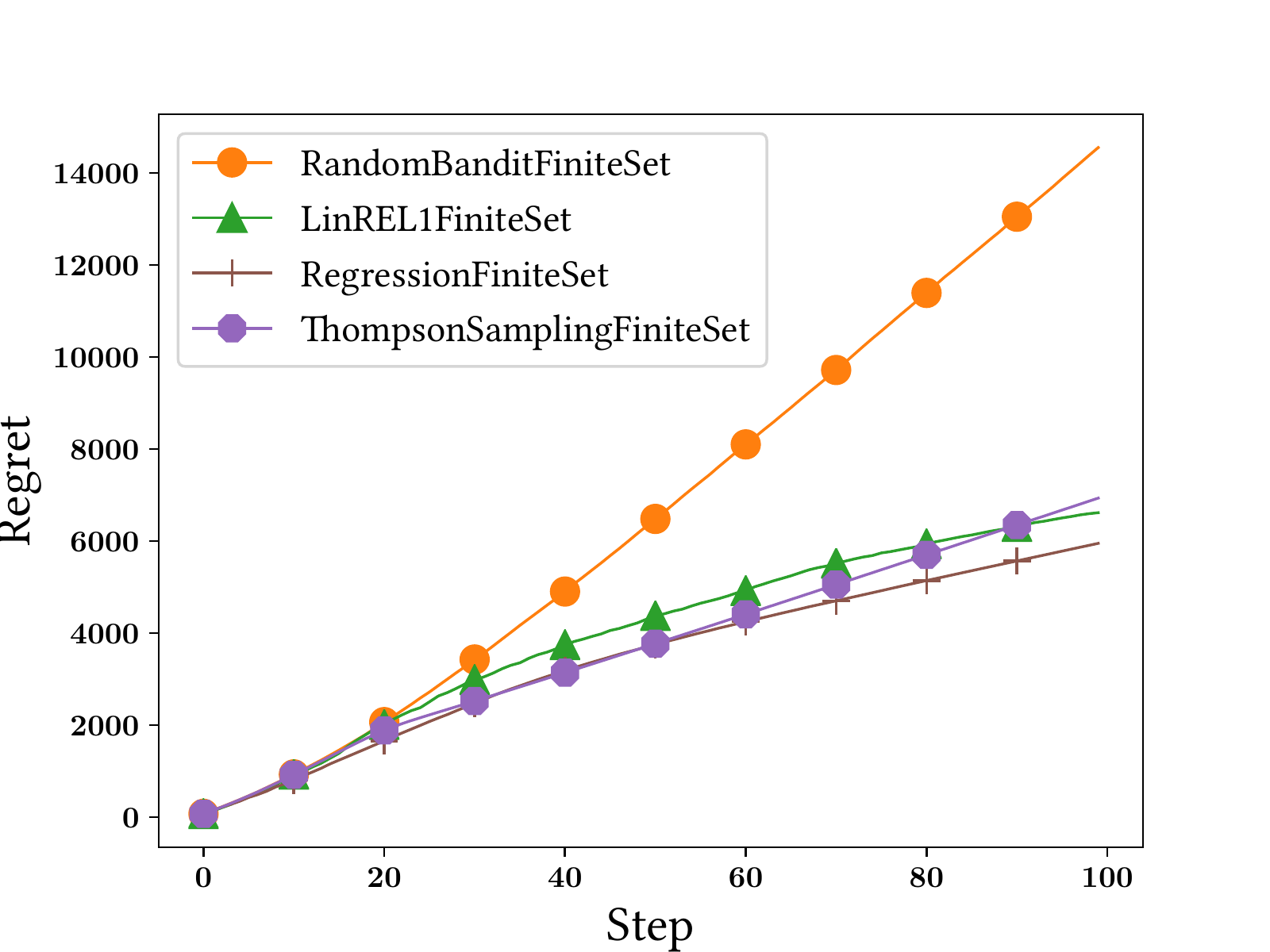}}
    ~
    \subfloat[Time, finite set $n=100$, $d=20$, $M=100$ (log $y$-axis)]{\includegraphics[width=0.3\textwidth]{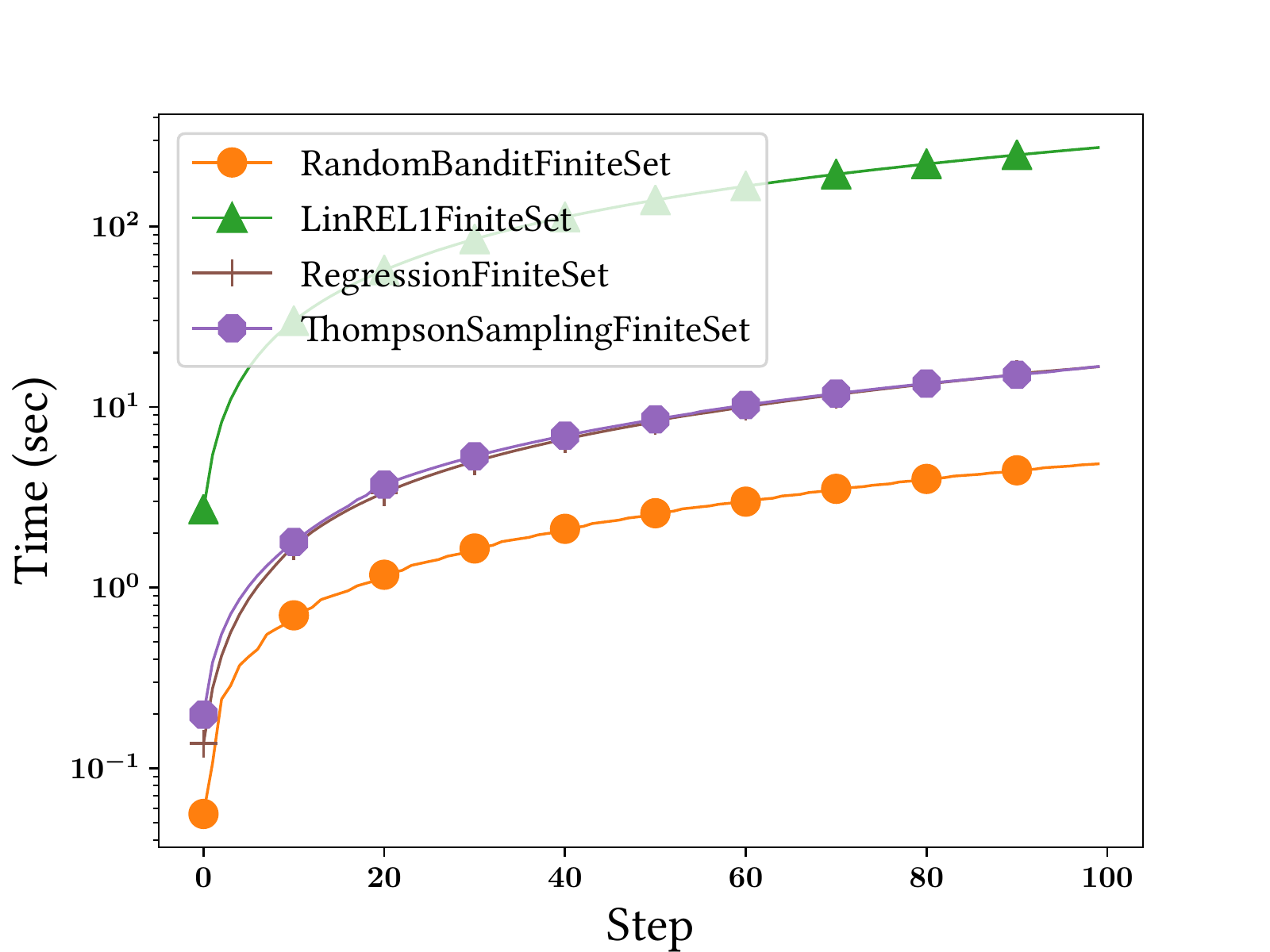}}
    
    \subfloat[Regret, $L_2$ ball $n=10$, $d=5$]{\includegraphics[width=0.3\textwidth]{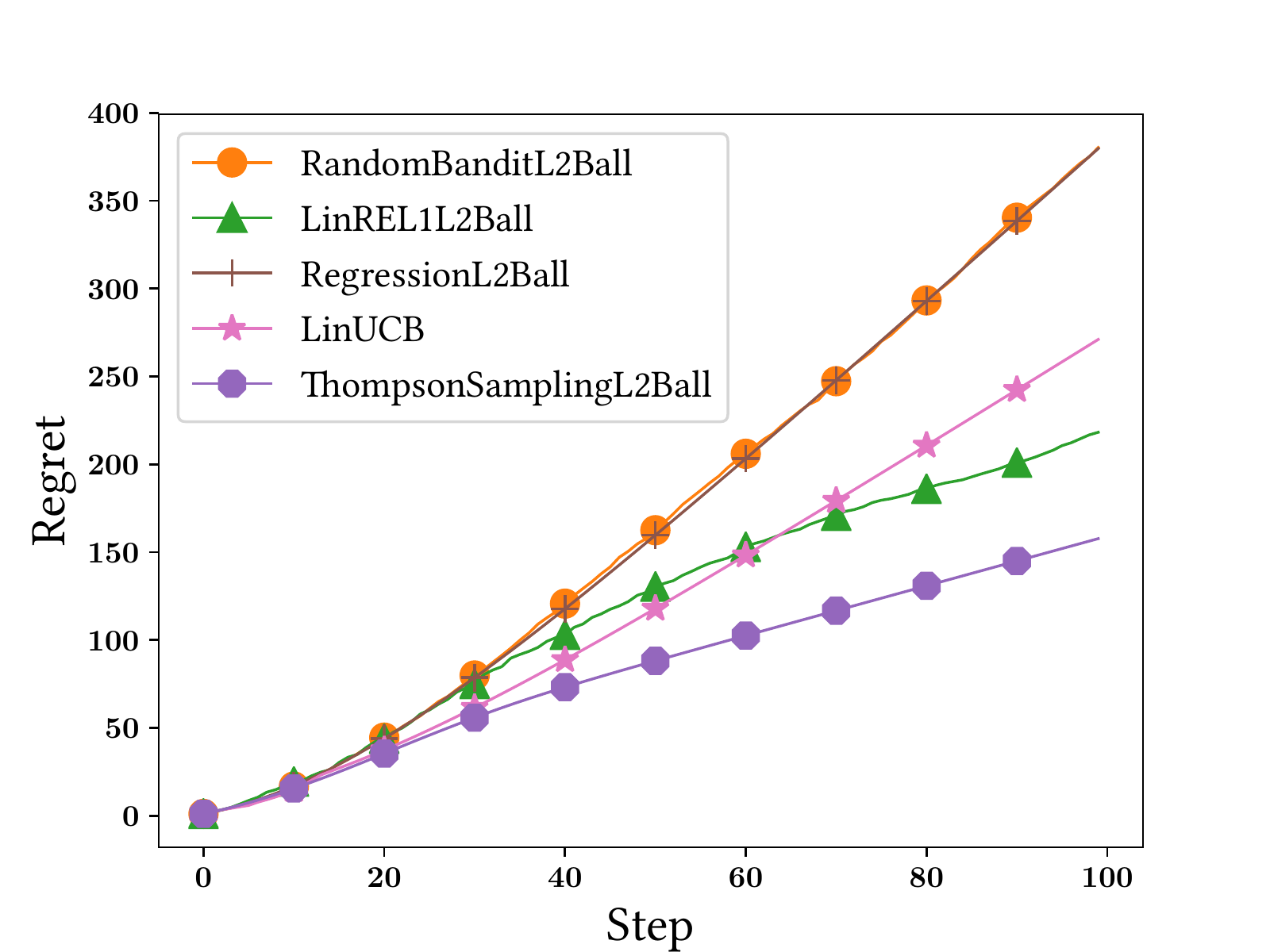}}
    ~
    \subfloat[Regret, $L_2$ ball $n=100$, $d=20$]{\includegraphics[width=0.3\textwidth]{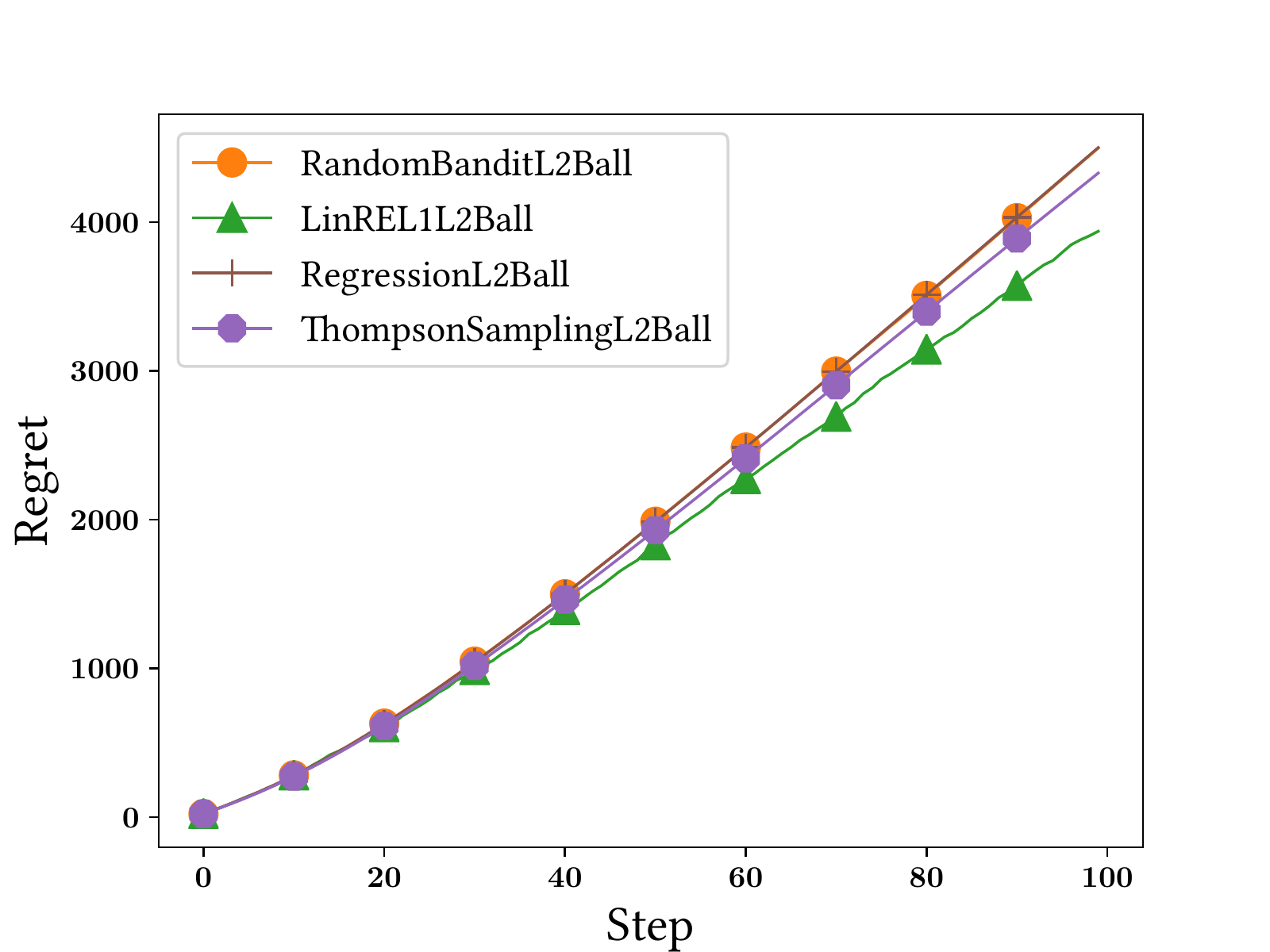}}
    ~
    \subfloat[Time, $L_2$ ball $n=10$, $d=5$ (log $y$-axis)]{\includegraphics[width=0.3\textwidth]{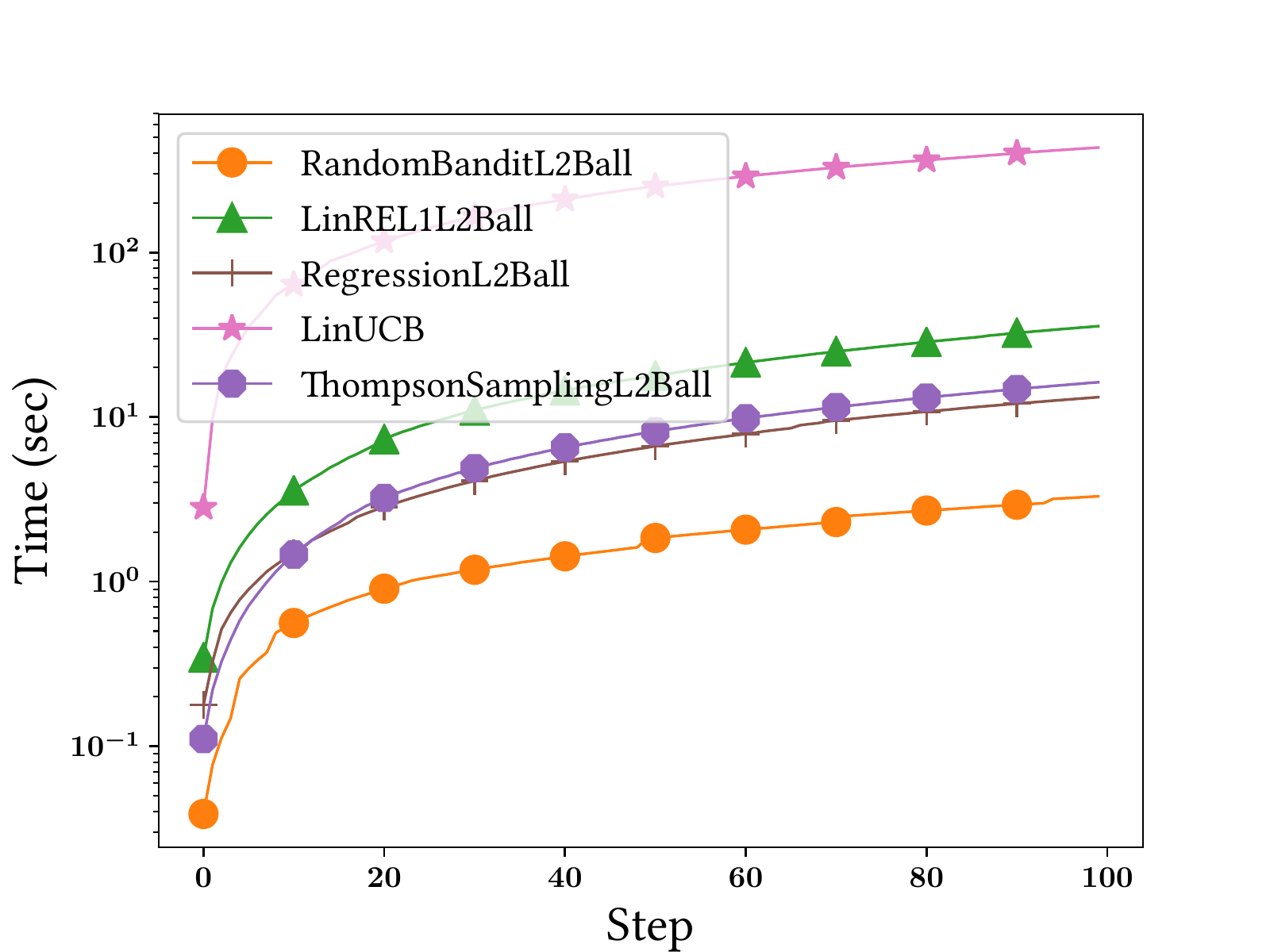}}
    
    \caption{Regret and execution time analysis. Setup: horizon $100$, $\sigma=1$, $10^{-5}$ 
        $\beta$ scale}\label{fig:regret_time_syn}
\end{figure*}

\subsection{Synthetic Data.} 
\noindent\textbf{Experiment Setup.} 
\fullversion{We generate a complete network (CMP) on which influence probabilities -- representing the matrix $P$ -- were assigned between the $n^2$ pairs each having a value of $1/n$.}{
We  generate three different types of synthetic networks from which $P$ was constructed:
\begin{enumerate}
\item a complete network (CMP) on which influence probabilities were assigned between the $n^2$ pairs each having a value of $1/n$
\item an Erd\H{o}s-R\'enyi graph (ER) having $p=\frac{\log n}{n}$, corresponding to the setting in which the graph contains a single connected component w.h.p.; the influence probability on each edge $(v_i,v_j)$ was set as $1/\text{deg}(v_i)$
\item a Barab\'asi-Albert (BA) graph, with parameter $m=\log n$, i.e., the same average degree as in ER, and $P$ was generated as in the case of ER. 
\end{enumerate}
We generated graphs of different sizes  $n$ given by Table~\ref{tab:params}.}
In our experiments on synthetic data, we set $\alpha$ to be $0.05$.  \fullversion{}{We constructed $P$ as detailed above.  We consider recommendation profiles of  $d$ again spanning multiple values shown in Table~\ref{tab:params}.} We consider two cases for the set of recommendations $\mathcal{B}$: (a) a finite catalog of size $M=|\mathcal{B}|$, for $M$ \fullversion{in $\{10,100\}$}{given in Table~\ref{tab:params}}, and (b) the unit ball $\mathbb{B}$. In the case of a finite catalog, we generate $M$ random vectors in $[0{,}1]^d$ by uniform sampling. Finally we generate inherent profiles by sampling $n$ vectors uniformly from $[0{,}1]^d$.
Given a recommendation we assume that ratings by users are generated via \eqref{factor}, where i.i.d.~noise $\varepsilon$ has standard deviation %
\fullversion{$1$}{$\sigma$. We also explore different values of $\sigma$, as indicated in Table~\ref{tab:params}.}

\noindent\textbf{Algorithms.}
We implement\footnote{Our code is available at \texttt{\url{https://git.io/JUuSE}}.}
\linrel with $\mathcal{C}^1_t$ constraints (\linrel1),   \ts, and \linucb. 
In the case of \linrel, we multiply the theoretically designed radius $\beta_t$, given by \eqref{eq:betat}, with a scaling factor $\beta$; \fullversion{we set it to $\beta=10^{-5}$ after experimental validation}{we experiment with several values of $\beta$, as shown in Table~\ref{tab:params}}.  

We compare to two baselines: (a) the usual random bandit (\textsc{Rand}), which explores without exploiting by choosing a vector randomly, and (b) a regression  baseline (\textsc{Regres\-sion}) which exploits without exploring: at each step, it only performs the least squares estimation of profiles and selects optimal recommendations in $\mathcal{B}$, but does not do any exploration. 
In all cases, we set the number of rounds (the horizon $T$) to $100$.

\noindent\textbf{Regret and Execution Time.}
We compare the regret attained by different algorithms over topology CMP, in Figure~\ref{fig:regret_time_syn}. Subfigures (a),(b) show the regret as a function of iterations under a finite set catalog $\mathcal{B}$, and subfigures (d),(e) show the regret as a function of iterations when the catalog is the $\ell_2$ ball $\mathbb{B}$.   Compared to the baselines, we find that our bandit algorithms significantly outperform \textsc{Rand} and are competitive with ``reasonable'' baseline \textsc{Regression}. In the finite case, their difference from the latter is less pronounced. However, in the $\ell_2$ case, \textsc{Regression} becomes almost as ineffective as the random baseline. \ts is always at least competitive with \linrel, and in some cases, especially for low values of $n$ and $d$, seems to be the best variant. In the only test we could run for \linucb we found that it is worse than \linrel and \ts.

When looking at the execution time (Figure~\ref{fig:regret_time_syn}c,f), it becomes clear that first, \textsc{LinUCB} is too costly for its regret performance. This is due to the SDP relaxation involved in arm selection. \linrel has a more reasonable running time, but the best trade-off is attained by \textsc{ThompsonSampling}.

\fullversion{}{
\begin{table}[!t]\centering
    \caption{Experimental parameter settings. Values in bold are the ones used by default unless otherwise specified.}\label{tab:params}
    \small
    \begin{tabular}{cr}
        \toprule
        Parameter&Values\\
        \midrule
        $n$&$\{2,\mathbf{10},100\}$\\
        $d$&$\{2,\mathbf{5},10,20\}$\\
        $\alpha$&$\{0.01,0.05,\mathbf{0.1},0.2,0.3,0.4,0.5\}$\\
        $M=|\mathcal{B}|$&$\{10,\mathbf{100},1000\}$\\
        $\sigma$&$\{0.1,\mathbf{1},2\}$\\
        $\beta$ scaling factor&$\{1, 10^{-1},\dots, \mathbf{10^{-5}},10^{-6}\}$\\
        \bottomrule
    \end{tabular}
\end{table}
}

%

\fullversion{}{
\noindent\textbf{Impact of Catalog Size.} Table~\ref{tab:comp_items} shows the effect of the catalog size $M$ on the regret for different algorithms. we see that the bandit-based algorithms are generally better than the \textsc{Regression} baseline, however which one is the best can vary between the different settings of $M$. Not surprisingly, reducing $M$ gives more advantage to \textsc{Regression}, as there is less benefit to exploration. When options become abundant however, \textsc{Regression} becomes worse than the bandit methods.

\noindent\textbf{Impact of Noise.} We observe a similar phenomenon when studying the impact of noise variance. Table~\ref{tab:comp_sigma}  shows the impact that noise in responses, as captured by $\sigma$,  has on the regret of different algorithms.  We observe that, in the case of finite sets and for low noise ($\sigma=0.1$), there is no need for exploration and \textsc{Regression} performs best. On the other hand, the usefulness of the bandit-based algorithms is clear for higher values.

\begin{table}[!t]\centering
  \caption{Regret at horizon $100$ obtained for various values of $M$, finite case; lower values are better.}\label{tab:comp_items}
  \small
  \begin{tabular}{|c|rrr|}
    \hline
    Method&$M=10$&$M=100$&$M=1000$\\
    \hline
    \textsc{LinREL1}& $138.00$ & $216.76$ & $\mathbf{278.87}$\\
    \textsc{ThompsonSampling}& $212.23$ & $\mathbf{68.64}$ & $450.66$\\
    \textsc{Regression}& $\mathbf{28.69}$ & $199.11$ & $400.24$\\
    \hline
  \end{tabular}
\end{table}

\begin{table*}[!t]\centering
  \caption{Regret at horizon $100$ obtained for various $\sigma$ values, $M=100$; lower values are better.}\label{tab:comp_sigma}
  \begin{tabular}{|c|rrr|rrr|}
    \hline
    & \multicolumn{3}{c}{Finite Set} \vline& \multicolumn{3}{c}{$L_2$ Ball}\vline\\
    Method&$\sigma=2$&$\sigma=1$&$\sigma=0.1$&$\sigma=2$&$\sigma=1$&$\sigma=0.1$\\
    \hline
    \textsc{LinREL1}& $428.33$ & $216.76$ & $139.37$& $314.07$ & $218.24$ & $107.82$\\
    \textsc{TSamp.}& $\mathbf{83.39}$ & $\mathbf{68.64}$ & $47.48$& $\mathbf{159.50}$ & $\mathbf{157.71}$ & $\mathbf{136.23}$\\
    \textsc{Regr.}& $402.83$ & $199.11$ & $\boldsymbol{27.93}$& $379.71$ & $379.71$ & $379.71$\\
    \hline
  \end{tabular}
\end{table*}

\begin{table*}\centering
  \caption{Regret at horizon $100$ obtained for various methods of computing $A$ (expectation default, stochastic, fix-point $A_\infty$); lower values are better.}\label{tab:comp_types}
  \begin{tabular}{|c|rrr|rrr|}
    \hline
    & \multicolumn{3}{c}{Finite Set} \vline& \multicolumn{3}{c}{$L_2$ Ball}\vline\\
    Method& &stoch.&$A_\infty$& &stoch.&$A_\infty$\\
    \hline
    \textsc{LinREL1}& $216.76$ & $261.07$ & $397.07$& $217.24$ & $\mathbf{238.15}$ & $\mathbf{242.80}$\\
    \textsc{ThompsonSampling}& $\mathbf{68.64}$ & $\mathbf{40.65}$ & $\mathbf{281.07}$& $\mathbf{157.71}$ & $285.57$ & $446.83$\\
    \textsc{Regression}& $199.11$ & $243.26$ & $294.89$& $379.71$ & $457.00$ & $446.83$\\
    \hline
  \end{tabular}
\end{table*}

\noindent\textbf{Impact of Social Network Effects.} In our setting, the neighbor effects are modeled via the $\alpha$ parameter, which encodes how much the original profile of each user matters in the recommendation process. The higher $\alpha$ is, the less the social network matters; in all other experiments this value is set to $0.05$ -- in line with other Markov chain-style models using teleportation vectors. To give the full picture, we present in Figure~\ref{fig:comp_alpha} the comparison of our various algorithms with the random and regression baselines, for a selection of $\alpha$ values. On thing that is immediately apparent is that \linrel is quite stable for different values, while \ts seems to degrade as $\alpha$ goes higher. This suggests that \linrel may be a better overall option when $\alpha$ is high, while $\ts$ is generally betyter for low values (under $0.15$).

\begin{figure}\centering
	\subfloat[$\alpha$ values, finite set]{\includegraphics[width=0.45\textwidth]{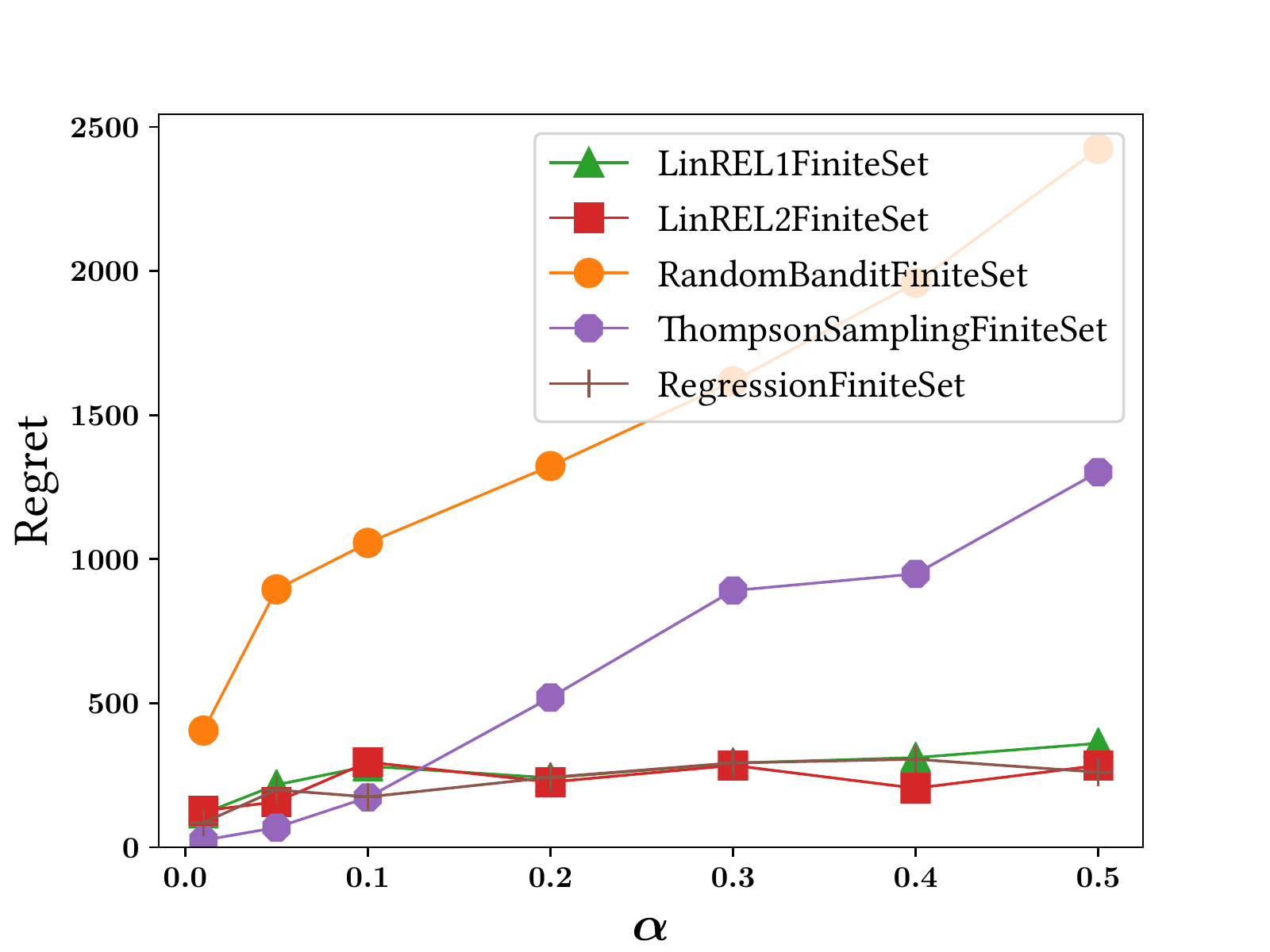}}
	~
	\subfloat[$\alpha$ values, $L_2$ ball]{\includegraphics[width=0.45\textwidth]{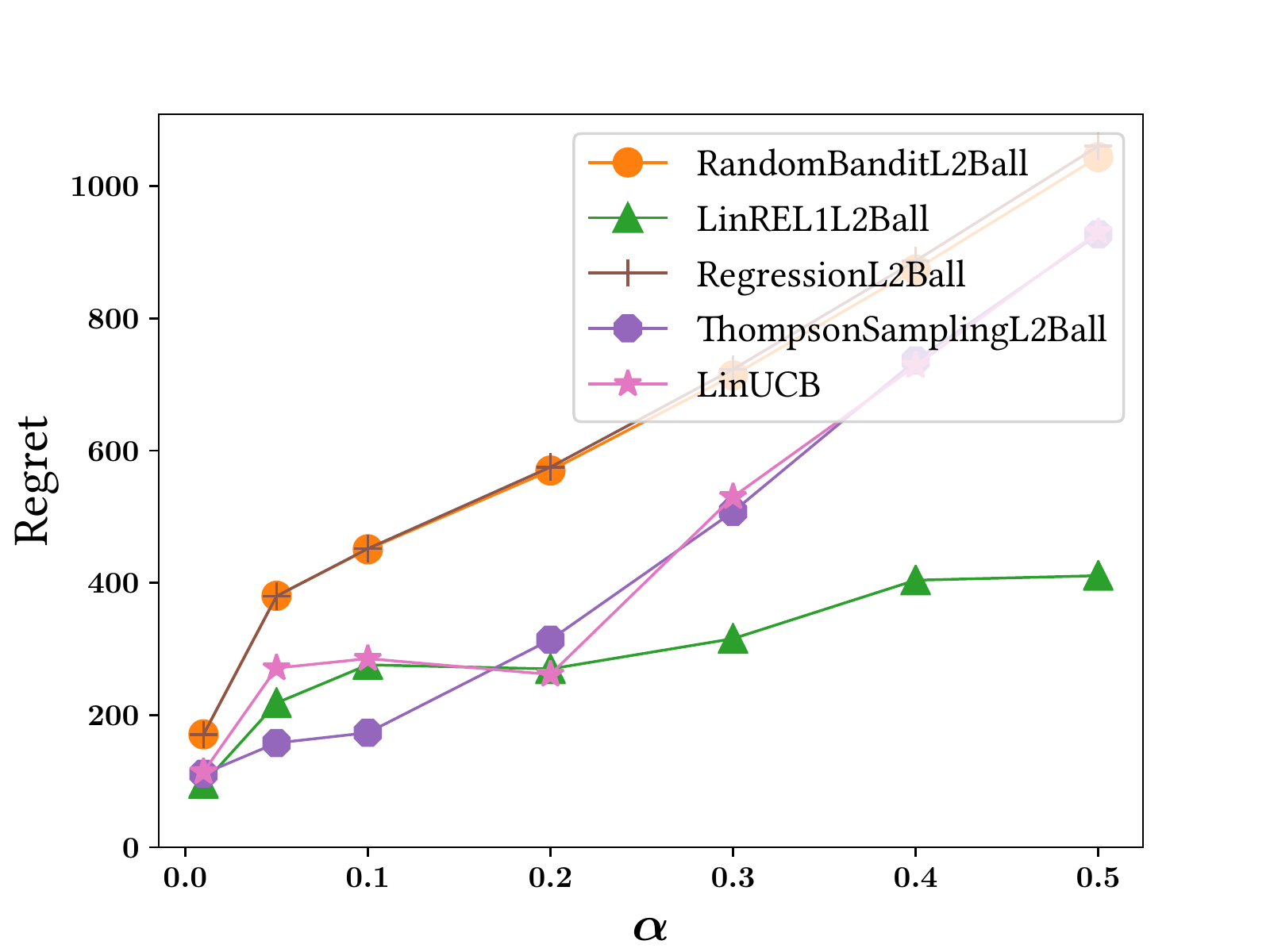}}
	
	\caption{Regret at horizon $100$ obtained for various values of $\alpha$; lower values are better.}\label{fig:comp_alpha}
\end{figure}

\noindent\textbf{Benefit of Tracking Dynamics}. Our model in Section~\ref{sec:problemformulation}, and resulting algorithms, operate under the assumption that user interests at each step evolve according to the dynamics prescribed by \eqref{evol}. 
We test here the behavior of our algorithms under the assumption that interests evolve \emph{stochastically}, i.e.,  with probability $\alpha$ one chooses $U_0$ and the social choice only occurs with $1-\alpha$ probability. In this setting, \eqref{evol} captures the expected (not the exact) evolution of interest dynamics.

We also study the case where the bandits do not take $A(t)$ at each step into account (i.e., the evolution of interests due to time, but assume that the system has converged to steady state. That is, the design matrix capturing interactions between users is $$A_\infty=\lim_{t\to\infty}A(t)\stackrel{\eqref{eq:social_update}}{=}\alpha(I -(1-\alpha)P)^{-1}$$ one takes the fixed point of $A$, denoted $A_\infty$. This reduces the bandit algorithms to classic linear bandits, as there is no longer a time-evolving interaction.

Table~\ref{tab:comp_types} shows that bandit algorithms are less sensitive to stochasticity than  \textsc{Regression} method; this is expected, as adding stochasticity in some sense increases variance which, in turn, makes the need for exploration more prominent.  Moreover, ignoring dynamics (i.e., using steady state influences $A_\infty$) is always worse than tracking them. 

\noindent\textbf{Comparison of Different Networks}. The same observation as above in general hold when comparing the three methods of generating the synthetic social networks, as shown in Table~\ref{tab:comp_nets}: bandits outperform the \textsc{Regression} baseline. 

\begin{table*}\centering
  \caption{Regret at horizon $100$ obtained for various synthetic graph models,  $n=100$, $M=1000$.}\label{tab:comp_nets}
  \begin{tabular}{|c|rrr|rrr|}
    \hline
    & \multicolumn{3}{c}{Finite Set} \vline& \multicolumn{3}{c}{$L_2$ Ball}\vline\\
    Method& cmp.& ER & BA& cmp.& ER & BA\\
    \hline
    \textsc{LinREL1}& $2587.88$ & $\mathbf{2667.86}$ & $2654.46$ & $1538.59$ & $\mathbf{1616.38}$ & $1584.60$\\
    \textsc{ThompsonSampling}& $\mathbf{885.68}$ & $3239.83$ & $\mathbf{2352.06}$& $\mathbf{1147.94}$ & $2086.91$ & $\mathbf{1215.80}$\\
    \textsc{Regression}& $2960.91$ & $2772.71$ & $2633.93$ & $2007.14$ & $2086.91$ & $1937.72$\\
    \hline
  \end{tabular}
\end{table*}

}

\fullversion{}{
\noindent\textbf{Impact of Scaling Factor $\beta$.} We observed that in practice $\beta_t$ overestimates the diameter of the confidence ball. When applying \emph{scaling factor} $\beta$, the value range of the regret is shown in Table~\ref{tab:params}. We compare in Figure~\ref{fig:comp_scale} how this scaling factor influences the \linrel regret. The regret decreases with the scaling factor, it reaches a low value at around $10^{-5}$ and then starts slowly increasing, suggesting that having a ellipsoid to optimize around is still beneficial for \linrel.

\begin{figure}\centering
  \subfloat[$\beta$ scale, finite set]{\includegraphics[width=0.45\textwidth]{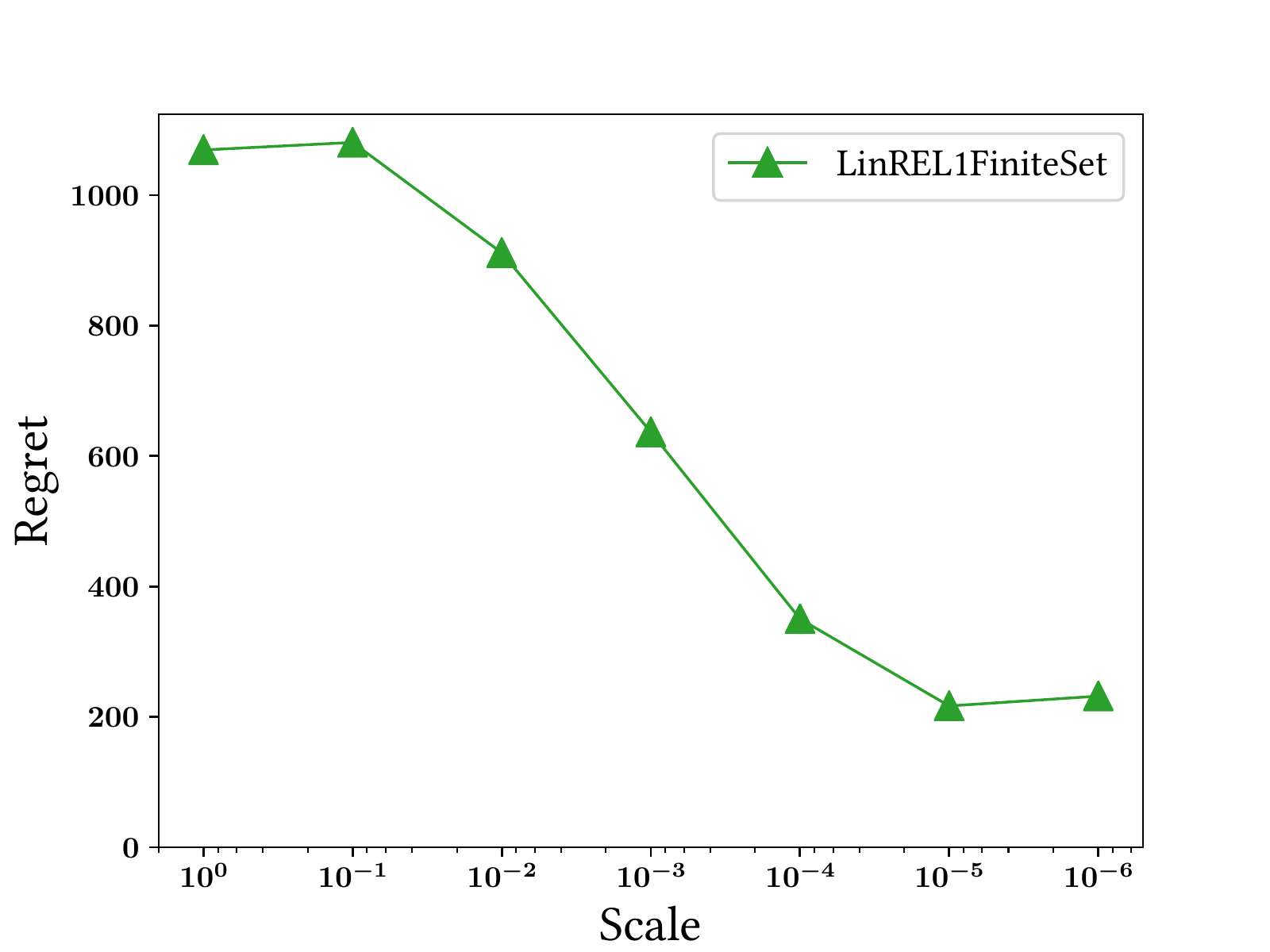}}
  ~
  \subfloat[$\beta$ scale, $L_2$ ball]{\includegraphics[width=0.45\textwidth]{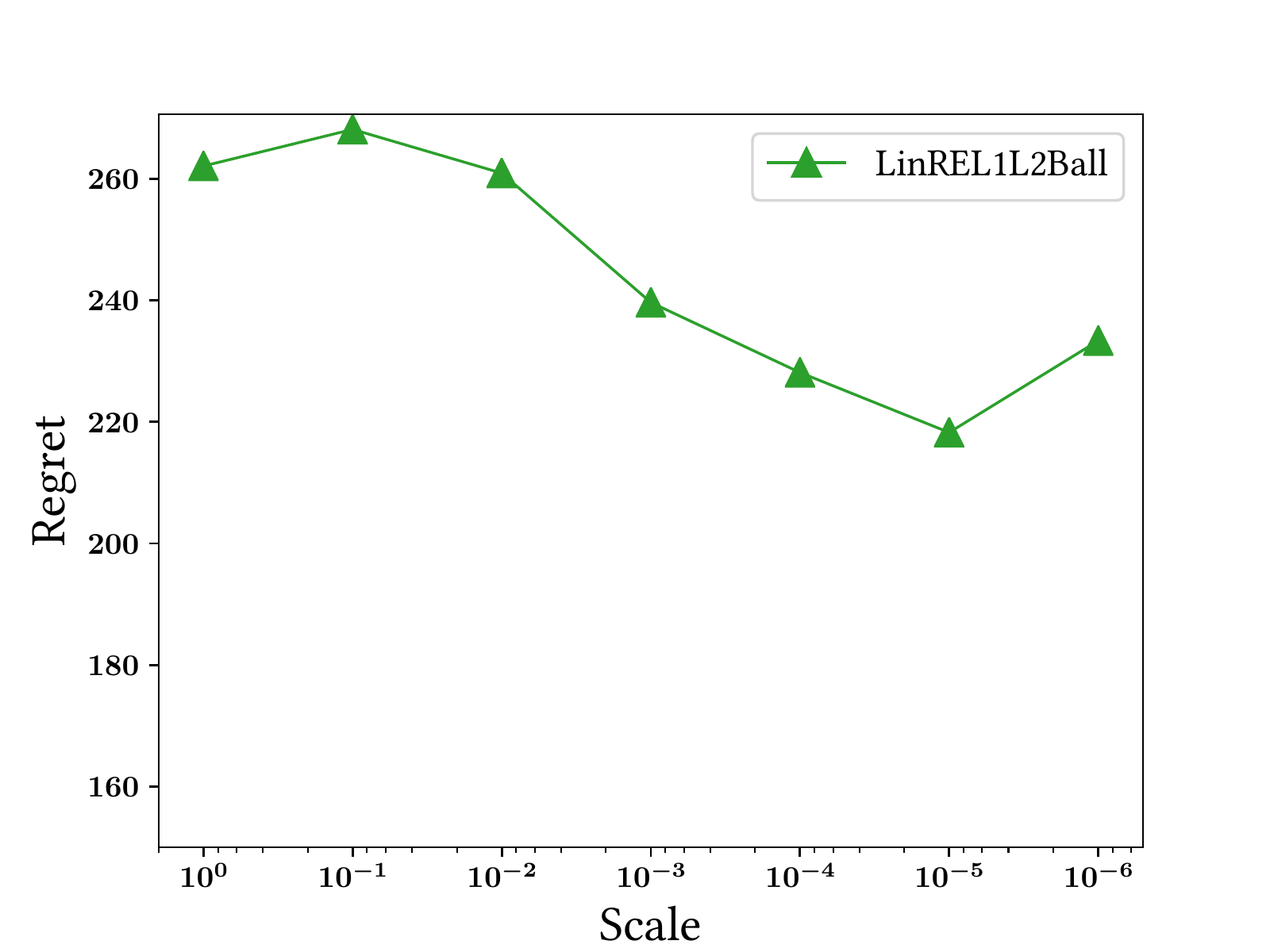}}
  
  \caption{Regret at horizon $100$ obtained for various values of the $\beta$ scale; lower values are better.}\label{fig:comp_scale}
\end{figure}
}

\subsection{Real Data}
 Finally, to validate the recommendation algorithms on real data, we used data from the now defunct Flixster site, a social network for movie discovery and reviews. The dataset contains $1{,}049{,}492$ users in a social network of $7{,}058{,}819$ contact links, $74{,}240$ movies that have generated $8{,}196{,}077$ reviews, each giving $1$ to $5$ stars. Each movie description includes one or more categories out of $28$ possible; each category is thus a dimension in the profile and item vectors.
 
To adapt this dataset to our setting, and to allow reasonable execution time for all methods, we have removed all users not having at least $1{,}500$ reviews. For each resulting user, we have evaluated the $\mathbf{u}_0$ vectors of $d=28$ by linear regression on their respective reviews, where the feature vector contains the $28$ categories and the $1$ to $5$ star ratings are mapped into the $[-1,1]$ interval. This resulted in a dataset of $n=206$ users and $d=28$. $P$ was generated by extracting the subgraph in the original network corresponding to the $206$ remaining users, and setting the influence probability \fullversion{}{as for ER and BA}, $1/\text{deg}(v_i)$. The finite set $M=100$ was generated, as in the synthetic case, by uniformly sampling each from $[0{,}1]^d$; we have not used real vectors, so as not to bias the recommendation (the $u_0$ vectors were generated via regression on real items, as described above).

\begin{figure}
  \centering
  
  \includegraphics[width=0.6\textwidth]{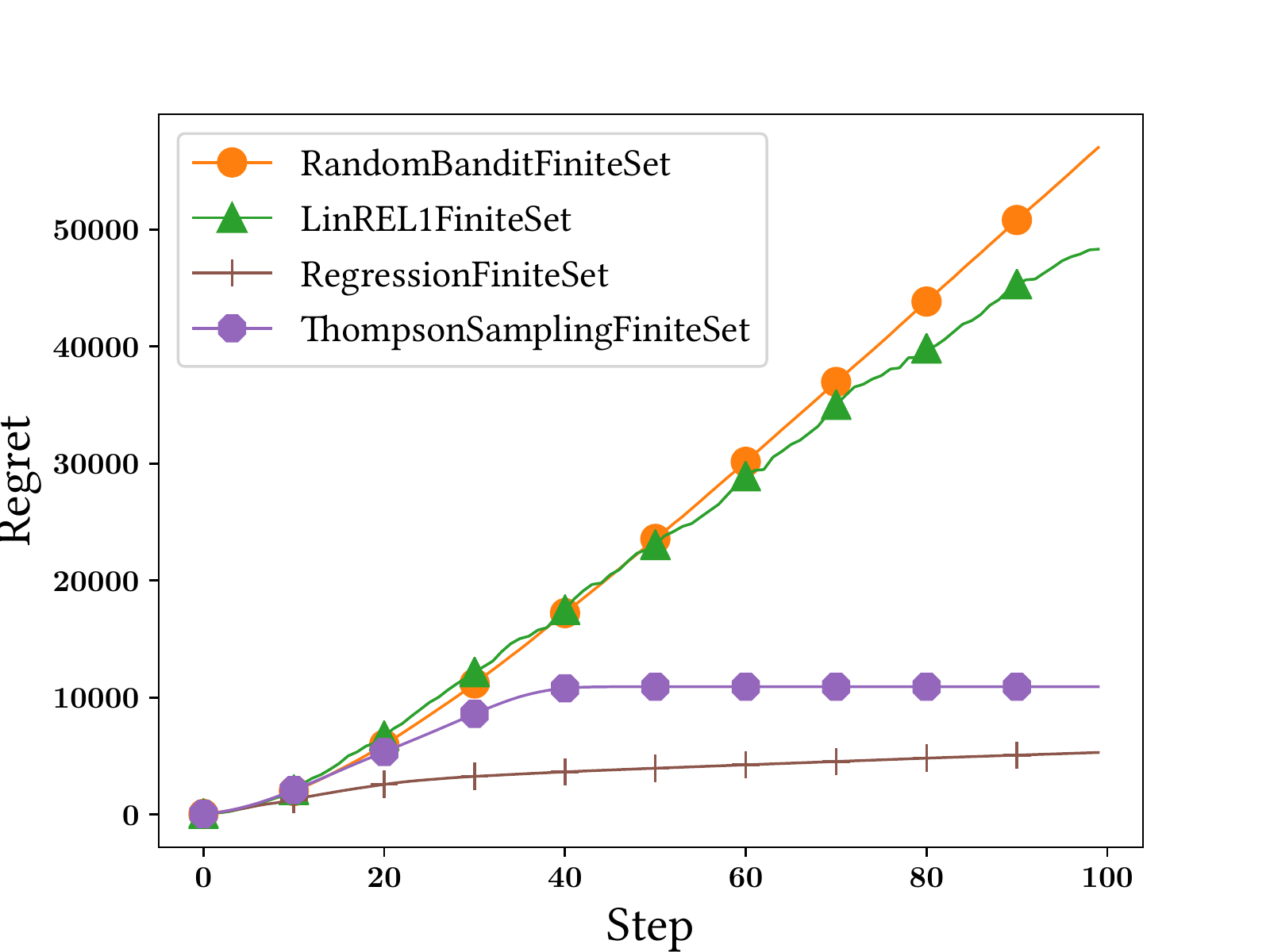}
  
  \caption{Flixstr regret $n=206$, $d=28$, $M=100$, $\sigma=1$}\label{fig:regret_flixstr}
\end{figure}

Figure~\ref{fig:regret_flixstr} presents the regret of the bandit algorithms. It can be seen that, in line with results on the synthetic data, \ts attains the best trade-off. The worse performance of \linrel is most likely due to a confidence ellipsoid that is too large. The good performance of \textsc{Regression} is an artifact of the experiment -- as said above, the $u_0$ vectors are generated via linear regression; in that sense, it can be considered a lower bound on the possible regret.

\fullversion{}{
\section{Conclusions}\label{sec:conclusions}

We presented online recommendation algorithms based on linear multi-armed bandit algorithms, for recommendation scenarios where user profiles evolve due to social influence. We showed that adapting \linrel and \ts to this setting leads to regret bounds that are similar to their original, static, settings. Importantly, we provide tractable cases for both algorithms; they are implemented and validated experimentally on both synthetic and real-world datasets.
}

\section*{Acknowledgments}

Silviu Maniu and Bogdan Cautis gratefully acknowledge support from project PSPC AIDA (2019-PSPC-09), and Stratis Ioannidis gratefully acknowledges support from the NSF (grants  CCF-1750539 and CCF-1937500).

\bibliographystyle{abbrv}
	\bibliography{bibliography}


\fullversion{}{\appendix




\section{Proof of Theorem~\ref{th:linrel_regret}}\label{sec:linrel_proof}

In the following, we use the general flow of the proof in \cite{dani2008stochastic} for the \textsc{ConfidenceBall}$_2$ algorithm, with the improvements used in~\cite{abbasi2011improved}.

The proof has two main ingredients:
\begin{enumerate}
  \item Assuming that $\ubb$ is in the $\mathcal{C}^2_{t}$ ball, then the regret is bounded.
  \item With high probability, $\ubb$ is in the $\mathcal{C}^2_{t}$ ball.
\end{enumerate}

\begin{lemma}
  If $\ubb\in\mathcal{C}_t^2$, 
  \[
  R(T)\leqslant\sqrt{8n^3d\beta_TT\ln T}.
  \]
  If $\ubb\in\mathcal{C}_t^1$,
  \[
  R(T)\leqslant nd\sqrt{8n^2\beta_TT\ln \log \left(1+\frac{n}{d}T\right)}.
  \]
\end{lemma}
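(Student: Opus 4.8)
The plan is to follow the \textsc{ConfidenceBall}$_2$ analysis of~\cite{dani2008stochastic} (with the sharper estimates of~\cite{abbasi2011improved}), the two new complications being the time‑varying operator $L(t)$ and the fact that each round contributes $n$ observations rather than one. Write $X_t\equiv X(V(t),A(t))\in\reals^{n\times nd}$ and let $\mathbf{x}_i(t)^\top$ be its $i$-th row, so that $Z(t)=\sum_{\tau<t}X_\tau^\top X_\tau=\sum_{\tau<t}\sum_i\mathbf{x}_i(\tau)\mathbf{x}_i(\tau)^\top$, and, from the computation in the proof of Lemma~\ref{lem:quadform}, $L(t)\vbb=X_t^\top\mathbf{1}$. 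First I would establish \emph{optimism}: on the hypothesized event $\ubb_0\in\mathcal{C}_t$, the joint maximization in line~\ref{line:optimization} returns $(\vbb_t,\tilde{\ubb}_t)$ with $\tilde{\ubb}_t^\top L(t)\vbb_t\ge\max_{\vbb}\ubb_0^\top L(t)\vbb=\ubb_0^\top L(t)\vbb^*(t)$, hence the per‑round regret obeys $r_t\le(\tilde{\ubb}_t-\ubb_0)^\top L(t)\vbb_t=\sum_{i=1}^n\mathbf{x}_i(t)^\top(\tilde{\ubb}_t-\ubb_0)$. Since $\ubb_0,\tilde{\ubb}_t\in\mathcal{C}_t$, whose diameter is $2\sqrt{\beta_t}$ in $\|\cdot\|_{2,Z(t)}$ for $\mathcal{C}_t=\mathcal{C}_t^2$ and $2\sqrt{nd\beta_t}$ in $\|\cdot\|_{1,Z(t)}$ for $\mathcal{C}_t=\mathcal{C}_t^1$, a Cauchy--Schwarz/Hölder step in the $Z(t)$-geometry (using $\|\cdot\|_\infty\le\|\cdot\|_2$ in the $\mathcal{C}^1$ case) gives $\mathbf{x}_i(t)^\top(\tilde{\ubb}_t-\ubb_0)\le 2\sqrt{\rho_t}\,\|\mathbf{x}_i(t)\|_{Z(t)^{-1}}$ with $\rho_t=\beta_t$ (resp.\ $\rho_t=nd\beta_t$). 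Summing over users (Cauchy--Schwarz, factor $\sqrt n$) and over rounds (Cauchy--Schwarz, factor $\sqrt T$) reduces everything to controlling the potential $\Phi_T\equiv\sum_{t=1}^T\trace\!\big(Z(t)^{-1}X_t^\top X_t\big)=\sum_{t,i}\|\mathbf{x}_i(t)\|_{Z(t)^{-1}}^2$, after which $R(T)\le 2\sqrt{n\rho_T T\,\Phi_T}$.

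The main obstacle is the potential bound, and it is here that the extra powers of $n$ compared to~\cite{dani2008stochastic} are paid. Two issues arise: (i) $Z$ is refreshed only once per round, so all $n$ within‑round pulls are scored against the stale matrix $Z(t)$ instead of the interleaved matrices $Z(t)+\sum_{j<i}\mathbf{x}_j(t)\mathbf{x}_j(t)^\top$; and (ii) one still needs to telescope a log‑determinant. For (i), since $X_t^\top X_t\preceq nI$ (the rows of $A(t)$ being sub‑stochastic and the items bounded, so $\|\mathbf{x}_i(t)\|_2\le1$) and $Z(t)\succeq\lambda I$ with $\lambda=\lambda_{\min}(Z(1))$ bounded below by a constant (guaranteed by the $d$ spanning pulls per user at initialization, or by adding a ridge term $\lambda_0 I$), the stale and interleaved norms differ by a factor at most $1+n/\lambda=\mathcal{O}(n)$; for (ii), the interleaved sum is at most $2\log\frac{\det Z(T+1)}{\det Z(1)}=\mathcal{O}\!\big(nd\ln(1+\tfrac nd T)\big)$ by the standard elliptical‑potential lemma. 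Hence $\Phi_T=\mathcal{O}\!\big(n^2 d\ln(1+\tfrac nd T)\big)$.

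Plugging this into $R(T)\le 2\sqrt{n\rho_T T\,\Phi_T}$ then gives $R(T)=\mathcal{O}\!\big(\sqrt{n^3 d\,\beta_T T\ln(1+\tfrac nd T)}\big)$ in the $\mathcal{C}^2$ case and $R(T)=\mathcal{O}\!\big(nd\sqrt{n^2\beta_T T\ln(1+\tfrac nd T)}\big)$ in the $\mathcal{C}^1$ case, which are exactly the two displayed inequalities up to absolute constants; the cruder bound $\ln T$ in place of $\ln(1+\tfrac nd T)$ follows by the same AM--GM estimate of $\det Z(T+1)$ used in~\cite{dani2008stochastic}. Steps (optimism, the two Cauchy--Schwarz summations, and the Hölder width bound) are routine once the decomposition $r_t\le\sum_i\mathbf{x}_i(t)^\top(\tilde{\ubb}_t-\ubb_0)$ is in hand; the only genuinely delicate point is the $\mathcal{O}(n)$ loss in (i) from batching the $n$ per‑round updates, which is precisely what turns the single‑user $\sqrt{d\beta_T T\,\mathrm{polylog}}$ into $n^{3/2}\sqrt{d\beta_T T\,\mathrm{polylog}}$.
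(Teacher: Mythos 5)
Your proposal is correct in its conclusions and follows the same overall skeleton as the paper's proof: optimism on the event $\ubb_0\in\mathcal{C}_t$ giving $\rho(t)\le\sum_i\x_i(t)^\top(\tilde{\ubb}_t-\ubb_0)$, the Cauchy--Schwarz (resp.\ H\"older) width bound with radius $\sqrt{\beta_t}$ (resp.\ $\sqrt{nd\beta_t}$), a double Cauchy--Schwarz over users and rounds, and an elliptical-potential / log-determinant bound with $\det Z(T)\le(1+\tfrac{n}{d}T)^{nd}$ via the trace--AM--GM estimate. The one place where you genuinely diverge is the treatment of the $n$ within-round observations in the potential bound. The paper keeps the batch intact: it uses the multi-rank update $\det Z(t+1)\ge\bigl(1+\sum_i w_i(t)^2\bigr)\det Z(t)$ (the displayed matrix ``identity'' there is really only a determinant inequality) together with $W(t)^2\le n\sum_i w_i(t)^2$ and $\min(y,n)\le 2n\log(1+y)$, which yields $\sum_t W(t)^2\le 2n^3d\log(1+\tfrac{n}{d}T)$ with explicit absolute constants and no dependence on $\lambda_{\min}(Z(1))$. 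You instead interleave the $n$ pulls and pay the staleness factor $1+n/\lambda_{\min}(Z(1))=\mathcal{O}(n)$ by comparing $Z(t)$ to $Z(t)+\sum_{j<i}\x_j\x_j^\top$. That argument is sound (note $Z(1)\succeq\alpha^2\lambda_0 I$ after the spanning initialization, so your hidden constant degrades as $\alpha\to0$, which the paper's route avoids), and it recovers the same order $\sqrt{n^3d\,\beta_TT\ln(1+\tfrac{n}{d}T)}$ and $nd\sqrt{n^2\beta_TT\ln(1+\tfrac{n}{d}T)}$ --- i.e., the lemma up to constants; the ``$\ln T$'' and ``$\ln\log$'' in the lemma statement are typos for $\ln(1+\tfrac{n}{d}T)$, as the paper's own derivation and Theorem~\ref{th:linrel_regret} confirm. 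In short: same architecture, a different (slightly lossier in constants, but equally valid asymptotically) handling of the batched update.
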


\begin{proof}
  
Recall that:
\begin{align*}
  Z(t) &=\sum_{\tau=1}^{t-1}X(V(\tau),A(\tau))^\top X(V(\tau),A(\tau))
  = \sum_{\tau=1}^{t-1} \sum_{i=1}^n \x_i(\tau)  \x_i(\tau)^\top.
\end{align*}
Further denote as $\ubb(t)$ the choice of $\ubb$ at time $t$. We need to bound the following difference(s) (the $t$ is ignored in the following):
\begin{align*}
|(\ubb_0-\ubb)^\top\x_i|&=|(\ubb_0-\ubb)^\top Z(t)^{1/2}Z(t)^{-1/2}\x_i|\\
&=\|(Z(t)^{1/2}(\ubb_0-\ubb)^\top Z(t)^{-1/2}\x_i\|\\
&\leqslant \|Z(t)^{1/2}(\ubb_0-\ubb)\|_2\cdot\|Z(t)^{-1/2}\x_i\|_2\\
&=\|Z(t)^{1/2}(\ubb_0-\ubb)\|_2\sqrt{\x_i^\top Z(t)^{-1}\x_i}\\
&\leqslant\sqrt{\beta_t\x_i^\top Z(t)^{-1}\x_i}.
\end{align*}

For the $\mathcal{C}_t^1$ case:
\begin{align*}
|(\ubb_0-\ubb)^\top\x_i|&\leqslant \|Z(t)^{1/2}(\ubb_0-\ubb)\|_1\cdot\|Z(t)^{-1/2}\x_i\|_\infty\\
&\leqslant \|Z(t)^{1/2}(\ubb_0-\ubb)\|_1\cdot\|Z(t)^{-1/2}\x_i\|_2\\
&\leqslant\sqrt{nd\beta_t\x_i^\top Z(t)^{-1}\x_i}.
\end{align*}

Denote $w_i(t)=\sqrt{\x_i(t)^\top Z(t)^{-1}\x_i(t)}$ and $W(t)=\sum_i w_i(t)$, the ``width'' in the direction of the chosen $\x$. 
We now study $\rho(t)=\ubb_0\top L(t)\vbb(t)-\ubb_0^\top L(t)\vbb_*(t)$, the instantaneous regret acquired on round $t$.
If $\tilde{\ubb}$ is the choice maximizing $\tilde{\ubb}^\top L(t) \vbb(t)$ and knowing that $\tilde{\ubb}_t^\top L(t)\vbb(t)\leqslant \ubb^\top L(t)\vbb^*$ we have: 
\begin{align*}
  \rho(t)&=\sum_i \ubb_0^\top\x_i(t)-\ubb_0^\top\x_{i*}(t)
  \leqslant\sum_i (\ubb_0-\tilde{\ubb})^\top \x_i(t)
  \leqslant 2\sqrt{\beta_t}W(t).
\end{align*}

We now study the evolution of $Z_t$:
\begin{align*}
  Z(t+1)&=Z(t)+\sum_{i=1}^n \x_i(t)\x_i(t)^\top\\
  &=Z(t)^{1/2}\left(I+\sum_i Z(t)^{-1/2}\x_i(t)\x_i(t)^\top Z(t)^{-1/2}\right)Z(t)^{1/2}\\
  &=Z(t)\left(I+\sum_i w_i(t)^2I\right),
\end{align*}
and so the determinant is:
\[
\det Z(t+1)=\left(1+\sum_i w_i(t)^2\right)\det Z(t).
\]

To prove this, we observe that the same arguments as in~\cite{dani2008stochastic,abbasi2011improved} can be used in our case.

If we take the trace:
\begin{align*}
\trace Z(t)=\trace\left(I+\sum_{\tau<t}\sum_{i=1}^n \x_i(t)\x_i(t)^\top\right)\leqslant nd+tn^2,
\end{align*}
Here, we assume that the first $nd$ rows form the standard basis) and, again, use the same argument as in~\cite{dani2008stochastic} ($\det Z_t$ equals the product of the eigenvalues, $\trace Z_t$ equals the sum of the eigenvalues, so $\det Z_t$ is maximized when eigenvalues are all equal) and using the inequality of arithmetic and geometric means, then:
\[
  \det Z(t)\leqslant \left(\frac{\trace Z(t)}{nd}\right)^{nd}\leqslant\left(1+\frac{n}{d}t\right)^{nd}.
\]

This allows us to derive the following bound (by Cauchy-Schwarz, and using the fact that $\log(1+y)\geqslant \frac{y}{1+y}$ and that $\sum_i w_i(t)^2\leqslant n$):
\begin{align*}
 \sum_{\tau=1}^{t}W(t)^2&\leqslant  n\sum_{\tau=1}^{t}\min\left(\sum_i        w_i(t)^2,n\right)
  \leqslant 2n^2\sum_{\tau=1}^{t}\log\left(1+\sum_i w_i(t)^2\right)\\
  &=2n^2\log(\det Z(t))
  \leqslant 2n^3d\log \left(1+\frac{n}{d}t\right).
\end{align*}

Bringing it all together:
\begin{align*}
  \sum_{t=1}^T \rho(t)^2&\leqslant \sum_{t=1}^T4\beta_t\min(w_t^2,1)
  \leqslant 4\beta_t\sum_{t=1}^T\min(w_t^2,1)\\
  &\leqslant 8\beta_tn^3d\log \log \left(1+\frac{n}{d}T\right).
\end{align*}

Using the Cauchy-Schwarz inequality again, we have for $\mathcal{C}_t^2$:
\begin{align*}
    \sum_{t=1}^T\rho(t) &\leqslant \sqrt{T\sum_{t=1}^T \rho(t)^2}
        \leqslant\sqrt{8n^3d\beta_TT\ln  \left(1+\frac{n}{d}T\right)},
\end{align*}
and, using the same steps for $\mathcal{C}_t^1$:
\begin{align*}
\sum_{t=1}^T\rho(t) &\leqslant nd\sqrt{8n^2\beta_TT\ln \log \left(1+\frac{n}{d}T\right)}.
\end{align*}

This concludes the proof.
\end{proof}

\begin{lemma}
  Given $\delta>0$, $\forall t$, $\text{Pr}(\ubb_0\in\mathcal{C}_t^2)\geqslant 1-\delta.$
\end{lemma}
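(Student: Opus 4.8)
The statement to prove is a high-probability confinement result: with probability at least $1-\delta$, the true parameter $\ubb_0$ lies in the ellipsoid $\mathcal{C}_t^2 = \{\ubb : \|\hat{\ubb}_0(t)-\ubb\|_{2,Z(t)} \le \sqrt{\beta_t}\}$ for all $t$, where $\beta_t$ is the specific schedule given in \eqref{eq:betat}. This is the second ingredient flagged in the proof outline, and the plan is to follow the template of \cite{dani2008stochastic} (\textsc{ConfidenceBall}$_2$), incorporating the self-normalized martingale refinements of \cite{abbasi2011improved}, but adapted to our setting where the ``context'' matrix at time $\tau$ is $X(V(\tau),A(\tau))$ rather than a single arm vector.

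First I would write the least-squares error in closed form. Since $\mathbf{r}(\tau) = X(V(\tau),A(\tau))\ubb_0 + \boldsymbol{\varepsilon}(\tau)$ with $\boldsymbol{\varepsilon}(\tau)$ zero-mean noise, the normal equations give $\hat{\ubb}_0(t) - \ubb_0 = Z(t)^{-1}\sum_{\tau=1}^{t-1} X(V(\tau),A(\tau))^\top \boldsymbol{\varepsilon}(\tau)$, where $Z(t) = \sum_{\tau=1}^{t-1} X(V(\tau),A(\tau))^\top X(V(\tau),A(\tau))$ (here I would note that the $nd$ initialization pulls guarantee $Z(t)$ is invertible, or add a regularizer $\lambda I$ as in \cite{abbasi2011improved}). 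Consequently
\[
\|\hat{\ubb}_0(t)-\ubb_0\|_{2,Z(t)}^2 = \Big\| \textstyle\sum_{\tau=1}^{t-1} X(V(\tau),A(\tau))^\top \boldsymbol{\varepsilon}(\tau) \Big\|_{Z(t)^{-1}}^2,
\]
which is exactly a self-normalized vector-valued martingale tail quantity. The key step is then to bound this quantity with high probability: unrolling over the $n$ coordinates of each reward vector, the sum $S(t) = \sum_{\tau=1}^{t-1}\sum_{i=1}^n \x_i(\tau)\,\varepsilon_i(\tau)$ is a martingale (with respect to the filtration generated by past rewards and the algorithm's choices), and $Z(t) = \sum_{\tau,i}\x_i(\tau)\x_i(\tau)^\top$ is its predictable quadratic-variation proxy. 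I would invoke the self-normalized bound (Theorem 1 of \cite{abbasi2011improved}, or the original covering-argument version in \cite{dani2008stochastic}): with probability $\ge 1-\delta$, for all $t$, $\|S(t)\|_{Z(t)^{-1}}^2 \le 2\sigma^2 \log\!\big(\det(Z(t))^{1/2}/(\delta \det(\lambda I)^{1/2})\big)$, then plug in the determinant bound $\det Z(t) \le (1+\tfrac{n}{d}t)^{nd}$ already derived in the previous lemma. Matching the resulting expression against the prescribed $\beta_t$ in \eqref{eq:betat} — which is inflated by the factor $128nd\ln t$ and the $(\tfrac{8}{3}\ln\tfrac{t^2}{\delta})^2$ term to absorb, respectively, the covering-number/dimension cost and the sub-exponential noise tail — closes the argument; a union bound over $t$ (using $\sum_t \delta/t^2 < \delta$, which is where the $t^2$ inside the logarithm comes from) gives the ``$\forall t$'' form.

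The main obstacle I anticipate is \emph{not} the martingale inequality itself — that is essentially off-the-shelf — but verifying that the boundedness / variance assumptions underlying it survive the passage through the social-influence machinery. Specifically, the context vectors $\x_i(\tau)$ are rows of $X(V(\tau),A(\tau))$ in \eqref{xarms}, whose entries are $a_{ij}(\tau)\vb_i(\tau)^\top$; since $A(\tau) = \alpha\sum_{k=0}^\tau((1-\alpha)P)^k$ and $P$ is stochastic, one has $\sum_j a_{ij}(\tau) \le 1$ uniformly in $\tau$, so $\|\x_i(\tau)\|_2$ stays bounded (by $\max_{\vb\in\mathcal B}\|\vb\|_2$, or $1$ for the unit ball) independently of the horizon — this uniform bound is exactly what makes the same $\beta_t$ work here as in the static case, and it is the place where I'd be most careful to get the constants right. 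A secondary subtlety is that the conditional noise in coordinate $i$ at time $\tau$ must be shown to be conditionally sub-Gaussian (or at least have the sub-exponential control matching the second branch of $\beta_t$) given the history including the arm choice $V(\tau)$, which follows from the i.i.d.\ assumption on $\varepsilon$ across users and timeslots in \eqref{factor}. Once these structural facts are in place, the proof is a direct transcription of \cite{dani2008stochastic,abbasi2011improved} with $d \mapsto nd$ and $\tfrac{1}{d} \mapsto \tfrac{n}{d}$ in the determinant bound.
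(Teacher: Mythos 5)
Your proposal is correct, but it follows a genuinely different route from the paper's. The paper tracks the quantity $E(t)=(\ubb_0-\hat{\ubb}(t))^\top Z(t)(\ubb_0-\hat{\ubb}(t))$ recursively, user by user (so over $nT$ micro-steps), expands its one-step growth via the matrix inversion lemma, and controls the cross term with a martingale difference sequence truncated by an escape event, to which Freedman's inequality is applied with variance $8\beta_\tau nd\ln\tau$; the two branches of $\beta_t$ in \eqref{eq:betat} come directly from the two tails of Freedman's bound, and the $\delta/\tau^2$ union bound matches yours. You instead write the least-squares error in closed form and bound $\|\sum_{\tau,i}\x_i(\tau)\varepsilon_i(\tau)\|_{Z(t)^{-1}}$ with the self-normalized martingale inequality of Abbasi-Yadkori et al., combined with the determinant bound $\det Z(t)\le(1+\tfrac{n}{d}t)^{nd}$. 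Both are sound; yours is cleaner and typically yields a tighter radius (so the paper's $\beta_t$ dominates it, as you note one must check), while the paper's Freedman route reproduces the specific two-branch form of \eqref{eq:betat} exactly and only needs bounded martingale differences rather than conditional sub-Gaussianity. Your identification of the two structural facts that make the static analysis carry over --- the uniform bound $\sum_j a_{ij}(\tau)=1-(1-\alpha)^{\tau+1}\le 1$ on the rows of $A(\tau)$, hence on $\|\x_i(\tau)\|_2$, and the per-user unrolling of the reward vector --- is precisely the adaptation the paper makes implicitly, and is the substantive content of the lemma beyond the off-the-shelf concentration step.
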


\begin{proof}\emph{(Sketch)}
  To prove this, we need to analyze the measure 
  \[
  E_t=\left(\ubb_0-\hat{\
  \ubb}(t)\right)^\top Z(t)\left(\ubb_0-\hat{\ubb}(t)\right)
\]
 and how it grows with $t$, and then try to bound it.
  
  The proof is the same as the one in~\cite{dani2008stochastic}, with the following change: instead of analyzing the change at step $t$, we analyze it on as user-by-user basis. In other words, instead of $T$ timesteps, we have $nT$ timesteps.
  
  The changes from the proof in~\cite{dani2008stochastic} regard the definitions, at a given timestep $\tau$:
   $   Z(\tau+1)=Z(\tau)+\x(\tau)\x(\tau)^\top,$
      $Y(\tau+1):=Z(\tau+1)\left(\ubb_0-\hat{\ubb}(\tau)\right)=Y(\tau)+\epsilon(\tau)\x(\tau)\x(\tau)^\top,$
  which allows us to write the growth of $E(\tau)$:
   $ E(\tau+1)=Y(\tau+1)^\top Z(\tau+1)^{-1}Y(\tau+1)
    =Y(\tau)^\top Z(\tau+1)^{-1}Y(\tau)+2\epsilon(\tau)\x(\tau)^\top Z(\tau+1)^{-1}Y(t)+\epsilon(\tau)^2\x(\tau)^\top Z(\tau+1)^{-1}\x(\tau).$
  
  Using the matrix inversion lemma, and the fact that $E(0)\leqslant nd$, we get the following bound on $E(t)$:
  \begin{align*}
        E(t)&\leqslant E(0)+2\sum_{\tau<t}\epsilon(\tau)\frac{\x(\tau)^\top(\ubb_0-\ubb(\tau))}{1+w(\tau)}+\sum_{\tau<t}\epsilon(\tau)^2\frac{w(\tau)^2}{1+w(\tau)^2}
  \end{align*}
  
  To bound this, we use the martingale difference sequence defined as:
  \[
    \mathcal{M}(\tau):=2\mathcal{E}(\tau)\frac{\x(\tau)^\top(\ubb_0-\hat{\ubb}(\tau))}{1+w(\tau)},
  \]
  where $\mathcal{E}(\tau)$ is an escape event encoding whether $E(\tau)$ is in the ball at timestep $\tau$:
  \[
    \mathcal{E}(\tau):=\mathbf{I}\{E(\tau)\leqslant\beta_\tau,\forall\tau\}.
  \]
  
  We will apply Freedman's theorem like in~\cite{dani2008stochastic}, by computing the variance:
  \begin{align*}
  \text{Var}(\tau)&:=\text{Var}(\mathcal{M}(\tau)\mid\mathcal{M}(1),\dots,\mathcal{M}(\tau-1))=8\beta_\tau nd\ln{\tau},
  \end{align*}
  and choosing the constants properly:
 \begin{align*}
      &\text{Pr}\left(\sum_t\mathcal{M}(t)\geqslant \beta_\tau/2\right)=\\
      &\text{Pr}\left(\sum_t\mathcal{M}(t)\geqslant~\text{and}~\text{Var}(\tau)\leqslant 8nd\beta_t\ln t\right)\\
      &\leqslant\max\left\{\exp\left(\frac{-\beta_\tau}{128nd\ln \tau}\right),\exp\left(\frac{-3\sqrt{\beta_\tau}}{8}\right)\right\}
      \leqslant\frac{\delta}{\tau^2}.
  \end{align*}
  Using the union bound, we get that:
  \begin{align*}
      \text{Pr}\left(\sum_{t<\tau}\mathcal{M}(t)\geqslant \beta_\tau/2,\forall\tau\right)\leqslant\delta.
  \end{align*}
  Using the above results, allows us to derive, with probability $1-\delta$, that:
  \begin{align*}
      E(t)\leqslant nd+2nd\ln t+\beta_t/2
      \leqslant\beta_t,
  \end{align*}
  which is the desired result.
\end{proof}
The proof for the $\mathcal{C}_t^1$ case is analogous, due to norm equivalence. Note that the confidence bounds and $\beta_t$ remain the same, due to the $\sqrt{nd}$ factor added in the definition of $\mathcal{C}_t^1$.

\section{Proof of Theorem~\ref{th:ts_regret}}\label{sec:ts_proof}

For \textsc{ThompsonSampling}, we analyze the Bayesian regret:
\begin{align*}
  \text{BR}(T)&= \mathbf{E}\left[\sum_{t=1}^T\left(\ubb_0^\top L(t)\vbb_*(t)-\ubb_0^\top L(t)\vbb(t)\right)\right]\\
  &=\sum_{t=1}^T\sum_{i=1}^n\mathbf{E}\left[\ubb_0^\top \x_{i*}(t)-\ubb_0^\top \x_i(t)\right].
\end{align*}
We focus on the case that when $\ubb$ is known to be in the confidence ball defined by $\beta_T$. The formula of $\beta_T$ is a straight adaptation of Thm.\ 20.5 of~\cite{lattimore2019bandit} (confidence bounds for linear estimators) and is not repeated here. The following is an adaptation of the proof of Thm.\ 36.4 of the same book. 

Denote as $U_{t}(\x_i)=\hat{\ubb}_0^\top\x_i(t)+\beta_tw_i(t)$ and $\mathcal{E}_i(t)$ the event that $\|\hat{\ubb}_0(t)-\ubb_0\|\leqslant\beta_t$, $\bar{\mathcal{E}_i}(t)$ the negated event, $\mathcal{E}(t)=\cap_{[n]} \mathcal{E}_i(t)$, and $\mathcal{E}=\cap_{[t]} \mathcal{E}(t)$.
Then:
\begin{align*}
    \text{BR}(T)&=\mathbf{E}\left[\sum_{t=1}^T\left(\ubb_0^\top L(t)\vbb_*(t)-\ubb_0^\top L(t)\vbb(t)\right)\right]\displaybreak[0]\\
    &=\mathbf{E}\left[\mathbf{I}_{\bar{\mathcal{E}}}\sum_{t=1}^T\left(\ubb_0^\top L(t)\vbb_*(t)-\ubb_0^\top L(t)\vbb(t)\right)\right]\displaybreak[0]\\
    &~~+\mathbf{E}\left[\mathbf{I}_\mathcal{E}\sum_{t=1}^T\left(\ubb_0^\top L(t)\vbb_*(t)-\ubb_0^\top L(t)\vbb(t)\right)\right],  
\end{align*}
and since $\exists t\leqslant T.\|\hat{\ubb}(t)-\ubb_0\|\leqslant 1/T$:
\begin{align*}
\text{BR}(T)&\leqslant 2+\mathbf{E}\left[\sum_{t=1}^T\sum_{i=1}^n \mathbf{I}_{\mathcal{E}_{i}(t)}\left(\ubb_0^T\left(\x_{i*}(t)-\x_i(t)\right)\right)\right].
\end{align*}

We need to bound:
\begin{align*}
  &\mathbf{E}_t\left[\sum_{i=1}^n \mathbf{I}_{\mathcal{E}_{i}(t)}\left(\ubb_0^T\left(\x_{i*}(t)-\x_i(t)\right)\right)\right]\\
  &\leqslant\sum_{i=1}^{n}\mathbf{I}_{\mathcal{E}_{it}}\mathbf{E}_{it}\left[\ubb_0^\top\x_{i*}(t)-U_t(\x_{i*})+U_t(\x_i)-\ubb_0^\top\x_i(t)\right],
\intertext{(because $\mathbf{E}_{it}\left[U_t(\x_{i*})\right]=\mathbf{E}_{it}\left[U_t(\x_{i})\right]$)}
&\leqslant\sum_{i=1}^{n}\mathbf{I}_{\mathcal{E}_{it}}\mathbf{E}_{it}\left[U_t(\x_i)-\ubb_0^\top\x_i(t)\right]\\
&\leqslant \sum_{i=1}^{n}\mathbf{I}_{\mathcal{E}_{it}}\mathbf{E}_{it}\left[\left(\hat{\ubb}_0(t)-\ubb_0\right)^\top\x_i(t)\right]+\beta_tw_i(t)\\
&\leqslant 2\beta_t\sum_i^nw_i(t)=2\beta_tW(t).
\end{align*}
Then:
\begin{align*}
  \mathbf{E}&\left[\sum_{t=1}^T\sum_{i=1}^n \mathbf{I}_{\mathcal{E}_{i}(t)}\left(\ubb_0^T\left(\x_{i*}(t)-\x_i(t)\right)\right)\right]
  \leqslant 2\beta_T\sum_{t=1}^T W(t)\\
  &\leqslant 2\beta_T\sqrt{T\sum_{t=1}^T W^2(t)}\leqslant 2\beta_T\sqrt{2n^3d\log\left(1+\frac{n}{d}T\right)}.
\end{align*}
The regret bound follows.

\section{SDP Relaxation for LinUCB}\label{app:linucb}

We use the same steps as in~\cite{lu2014optimal}, and we assume that the optimization is made on the $L_2$ ball for each $v_i\in \mathbb{R}^d$, $i\in[n]$, so the optimization becomes:
\begin{align*}
    \textbf{max}:&\quad \left( L(t)^\top\mathbf{\hat{u}}_{t}\right)^\top \mathbf{v}  + \mathbf{v}^\top cL(t)^\top \Sigma_{t} L(t) \mathbf{v} \\
    \text{s.t.}:&\quad \mathbf{v}^2\in \mathcal{D}_0,
\end{align*}
where $\mathcal{D}_0=\left\{ \mathbf{v}'\in\mathbb{R}^{nd}| \forall{i\in[n]},\sum_{j=1}^{nd}\mathbb{I}_{\lceil{j/n}\rceil=i}~ \mathbf{v}'_i\leqslant1\right\}$.
Removing the linear term by writing $\mathbf{y}=(\mathbf{v},t)\in\mathbb{R}^{nd+1}$, $t\leqslant 1$, we obtain:
\begin{align*}
    \textbf{max}:&\quad \mathbf{y}^\top H_0 \mathbf{y}\\
    \text{s.t.}:&\quad \mathbf{y}^2\in \mathcal{D},
\end{align*}
where:
\[
    H_0=
    \begin{bmatrix}
        cL(t)^\top \Sigma_{t} L(t) & L(t)^\top\mathbf{\hat{u}}_{t}/2\\
        \mathbf{\hat{u}}_{t}^\top L(t)/2 & 0
    \end{bmatrix}\in\mathbb{R}^{(nd+1)\times(nd+1)},
\]
and: $\mathcal{D}=\left\{(\mathbf{y}',t')\in\mathbb{R}^{nd+1}|\mathbf{y}'\in\mathcal{D}_0,t'\leqslant1\right\}.$
The SDP relaxation is made by setting $Y=\mathbf{y}\mathbf{y}^\top$ and solving the following optimization:
\begin{align}
    \textbf{max}:&\quad \text{tr}(HY)\\
    \text{s.t.}:&\quad Y\succeq 0\nonumber\\
    &\quad \text{diag}(Y)\in\mathcal{D}.\nonumber
\end{align}

}

\end{document}